\tikzset{
data/.style={circle, draw, text centered, minimum height=3em ,minimum width = .5em, inner sep = 2pt},
empty/.style={circle, text centered, minimum height=3em ,minimum width = .5em, inner sep = 2pt},
}
\newtheorem{theorem}{Theorem}[section]
\theoremstyle{definition}
\newtheorem{definition}{Definition}[section]
\newcommand{\pushright}[1]{\ifmeasuring@#1\else\omit\hfill$\displaystyle#1$\fi\ignorespaces}
\newcommand{\data}{\mathcal{D}}
\newcommand{\real}{\mathbb{R}}
\newcommand{\reward}{r}
\newcommand{\policy}{\pi}
\newcommand{\mdp}{\mathcal{M}}
\newcommand{\states}{\mathcal{S}}
\newcommand{\actions}{\mathcal{A}}
\newcommand{\observations}{\mathcal{O}}
\newcommand{\transitions}{T}
\newcommand{\initstate}{d_0}
\newcommand{\freq}{d}
\newcommand{\obsfunc}{E}
\newcommand{\discount}{\gamma}
\newcommand{\behavior}{{\pi_\beta}}
\newcommand{\bellman}{\mathcal{B}}
\newcommand{\qset}{\mathcal{Q}}
\newcommand{\batch}{B}
\newcommand{\qfeat}{\mathbf{f}}
\newcommand{\Qfeat}{\mathbf{F}}
\newcommand{\traj}{\tau}
\newcommand{\proj}{\Pi}
\newcommand{\return}{\mathcal{R}}
\newcommand{\conpen}{\mathcal{C}}
\newcommand{\en}{\mathcal{E}}
\newcommand{\by}{\mathbf{y}}
\newcommand{\bo}{\mathbf{o}}
\newcommand{\bs}{\mathbf{s}}
\newcommand{\ba}{\mathbf{a}}
\newcommand{\kl}{D_\text{KL}}
\newcommand{\ent}{\mathcal{H}}
\newcommand{\E}{\mathbb{E}}
\title{Offline Reinforcement Learning: Tutorial, Review, and Perspectives on Open Problems}
\author{
Sergey Levine$^{1,2}$, Aviral Kumar$^1$, George Tucker$^2$, Justin Fu$^1$\\
$^1$UC Berkeley, $^2$Google Research, Brain Team\\
}
\begin{document}

\maketitle


\begin{abstract}
In this tutorial article, we aim to provide the reader with the conceptual tools needed to get started on research on offline reinforcement learning algorithms: reinforcement learning algorithms that utilize previously collected data, without additional online data collection. Offline reinforcement learning algorithms hold tremendous promise for making it possible to turn large datasets into powerful decision making engines. Effective offline reinforcement learning methods would be able to extract policies with the maximum possible utility out of the available data, thereby allowing automation of a wide range of decision-making domains, from healthcare and education to robotics. However, the limitations of current algorithms make this difficult. We will aim to provide the reader with an understanding of these challenges, particularly in the context of modern deep reinforcement learning methods, and describe some potential solutions that have been explored in recent work to mitigate these challenges, along with recent applications, and a discussion of perspectives on open problems in the field.
\end{abstract}

\section{Introduction}
\label{sec:introduction}

Reinforcement learning provides a mathematical formalism for learning-based control. By utilizing reinforcement learning, we can automatically acquire near-optimal behavioral skills, represented by policies, for optimizing user-specified reward functions. The reward function defines \emph{what} an agent should do, and a reinforcement learning algorithm determines \emph{how} to do it. While the reinforcement learning algorithms have been an active area of research for decades, the introduction of effective high-capacity function approximators -- deep neural networks -- into reinforcement learning, along with effective algorithms for training them, has allowed reinforcement learning methods to attain excellent results along a wide range of domains~\citep{tesauro1994td,hafner2011reinforcement,levine2013guided,mnih2013playing,levine2016end,silver2017mastering,kalashnikov2018qtopt}.

However, the fact that reinforcement learning algorithms provide a fundamentally \emph{online} learning paradigm is also one of the biggest obstacles to their widespread adoption. The process of reinforcement learning involves iteratively collecting experience by interacting with the environment, typically with the latest learned policy, and then using that experience to improve the policy~\citep{sb-irl-98}. In many settings, this sort of online interaction is impractical, either because data collection is expensive (e.g., in robotics, educational agents, or healthcare) and dangerous (e.g., in autonomous driving, or healthcare). Furthermore, even in domains where online interaction is feasible, we might still prefer to utilize previously collected data instead -- for example, if the domain is complex and effective generalization requires large datasets.

Indeed, the success of machine learning methods across a range of practically relevant problems over the past decade can in large part be attributed to the advent of scalable \emph{data-driven} learning methods, which become better and better as they are trained with more data. Online reinforcement learning is difficult to reconcile with this paradigm. While this was arguably less of an issue when reinforcement learning methods utilized low-dimensional or linear parameterizations, and therefore relied on small datasets for small problems that were easy to collect or simulate~\citep{lange2012batch}, once deep networks are incorporated into reinforcement learning, it is tempting to consider whether the same kind of data-driven learning can be applied with reinforcement learning objectives, thus resulting in \emph{data-driven reinforcement learning} that utilizes only previously collected offline data, without any additional online interaction~\citep{kumar_blog,d4rl}. See Figure~\ref{fig:introduction} for a pictorial illustration. A number of recent works have illustrated the power of such an approach in enabling data-driven learning of policies for dialogue~\citep{jaques2019way}, robotic manipulation behaviors~\citep{ebert2018visual,kalashnikov2018qtopt}, and robotic navigation skills~\citep{kahn2020badgr}.

\begin{figure}
    \centering
    \vspace{-5pt}
    \includegraphics[width=0.98\columnwidth]{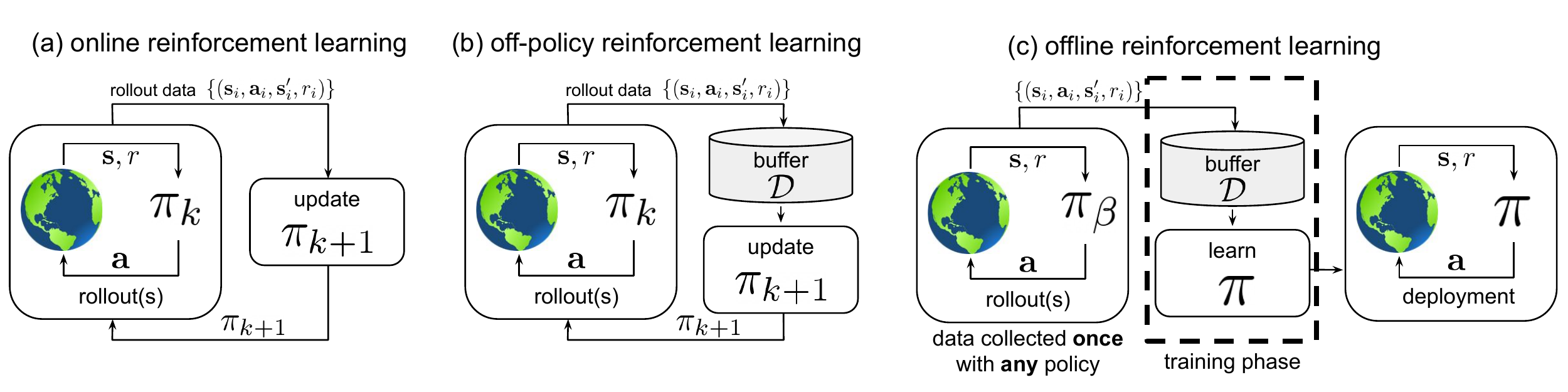}
    \caption{Pictorial illustration of classic online reinforcement learning (a), classic off-policy reinforcement learning (b), and offline reinforcement learning (c). In online reinforcement learning (a), the policy $\policy_k$ is updated with streaming data collected by $\policy_k$ itself. In the classic off-policy setting (b), the agent's experience is appended to a data buffer (also called a replay buffer) $\data$, and each new policy $\policy_k$ collects additional data, such that $\data$ is composed of samples from $\policy_0, \policy_1, \dots, \policy_k$, and all of this data is used to train an updated new policy $\policy_{k+1}$. In contrast, offline reinforcement learning employs a dataset $\data$ collected by some (potentially unknown) behavior policy $\behavior$. The dataset is collected once, and is not altered during training, which makes it feasible to use large previous collected datasets. The training process does not interact with the MDP at all, and the policy is only deployed after being fully trained.}
    \label{fig:introduction}
    \vspace{-10pt}
\end{figure}

Unfortunately, such data-driven \emph{offline} reinforcement learning also poses major algorithmic challenges. As we will discuss in this article, many commonly used reinforcement learning methods can learn from off-policy data, but such methods often cannot learn effectively from entire offline data, without any additional on-policy interaction. High-dimensional and expressive function approximation generally exacerbates this issue, since function approximation leaves the algorithms vulnerable to distributional shift, one of the central challenges with offline reinforcement learning. However, the appeal of a fully offline reinforcement learning framework is enormous: in the same way that supervised machine learning methods have enabled data to be turned into generalizable and powerful \emph{pattern recognizers} (e.g., image classifiers, speech recognition engines, etc.), offline reinforcement learning methods equipped with powerful function approximation may enable data to be turned into generalizable and powerful \emph{decision making engines}, effectively allowing anyone with a large enough dataset to turn this dataset into a policy that can optimize a desired utility criterion. From healthcare decision-making support to autonomous driving to robotics, the implications of a reliable and effective offline reinforcement learning method would be immense.

In some application domains, the lack of effective offline reinforcement learning methods has driven research in a number of interesting directions. For example, in robotics and autonomous driving, a rapidly growing research topic is the study of simulation to real-world transfer: training policies with reinforcement learning in simulation and then transferring these policies into the real world~\citep{sadeghi2016cad2rl,tan2018sim,chebotar2019closing}. While this approach is very pragmatic (and often effective), its popularity highlights the deficiency in offline reinforcement learning methods: if it was possible to simply train policies with previously collected data, it would likely be unnecessary in many cases to manually design high-fidelity simulators for simulation-to-real-world transfer. After all, outside of reinforcement learning (e.g., in computer vision, NLP, or speech recognition), transfer from simulation is comparatively much less prevalent, since data-driven learning is so effective.

The goal of this article is to provide the reader with the conceptual tools needed to get started on research in the field of offline reinforcement learning (also called batch reinforcement learning~\citep{ernst2005tree,riedmiller2005neural,lange2012batch}), so as to hopefully begin addressing some of these deficiencies. To this end, we will present the offline reinforcement learning problem formulation, and describe some of the challenges associated with this problem setting, particularly in light of recent research on deep reinforcement learning and the interaction between reinforcement learning and high-dimensional function approximator, such as deep networks. We will cover a variety of offline reinforcement learning methods studied in the literature. For each one, we will discuss the conceptual challenges, and initial steps taken to mitigate these challenges. We will then discuss some of the applications of offline reinforcement learning techniques that have already been explored, despite the limitations of current methods, and conclude with some perspectives on future work and open problems in the field.

\section{Offline Reinforcement Learning Problem Statement and Overview}

In this section, we will introduce the mathematical formalism of reinforcement learning and define our notation, and then set up the offline reinforcement learning problem setting, where the goal is to learn near-optimal policies from previously collected data. Then, we will briefly discuss some of the intuition behind why the offline reinforcement learning problem setting poses some unique challenges, using a supervised behavioral cloning example.

\subsection{Reinforcement Learning Preliminaries}
\label{sec:rl_prelims}

In this section, we will define basic reinforcement learning concepts, following standard textbook definitions~\citep{sb-irl-98}. Reinforcement learning addresses the problem of learning to control a dynamical system, in a general sense. The dynamical system is fully defined by a fully-observed or partially-observed Markov decision process (MDP).

\begin{definition}[Markov decision process]
The Markov decision process is defined as a tuple \mbox{$\mdp = (\states,\actions,\transitions,\initstate,\reward,\discount)$}, where $\states$ is a set of states $\bs \in \states$, which may be either discrete or continuous (i.e., multi-dimensional vectors), $\actions$ is a set of actions $\ba \in \actions$, which similarly can be discrete or continuous, $\transitions$ defines a conditional probability distribution of the form $\transitions(\bs_{t+1} | \bs_t, \ba_t)$ that describes the dynamics of the system,\footnote{We will sometimes use time subscripts (i.e., $\bs_{t+1}$ follows $\bs_t$), and sometimes ``prime'' notation (i.e., $\bs'$ is the state that follows $\bs$). Explicit time subscripts can help clarify the notation in finite-horizon settings, while ``prime'' notation is simpler in infinite-horizon settings where absolute time step indices are less meaningful.} $\initstate$ defines the initial state distribution $\initstate(\bs_0)$, $\reward : \states \times \actions \rightarrow \real$ defines a reward function, and $\gamma \in (0, 1]$ is a scalar discount factor.
\end{definition}

We will use the fully-observed formalism in most of this article, though the definition for the partially observed Markov decision process (POMDP) is also provided for completeness. The MDP definition can be extended to the partially observed setting as follows:

\begin{definition}[Partially observed Markov decision process]
The partially observed Markov decision process is defined as a tuple \mbox{$\mdp = (\states,\actions,\observations,\transitions,\initstate,\obsfunc,\reward,\discount)$}, where $\states$, $\actions$, $\transitions$, $\initstate$, $\reward$, and $\discount$ are defined as before, $\observations$ is a set of observations, where each observation is given by $\bo \in \observations$, and $\obsfunc$ is an emission function, which defines the distribution $\obsfunc(\bo_t | \bs_t)$.
\end{definition}

The final goal in a reinforcement learning problem is to learn a policy, which defines a distribution over actions conditioned on states, $\policy(\ba_t | \bs_t)$, or conditioned on observations in the partially observed setting, $\policy(\ba_t | \bo_t)$. The policy may also be conditioned on an observation history, $\policy(\ba_t | \bo_{0:t})$. From these definitions, we can derive the \emph{trajectory distribution}. The trajectory is a sequence of states and actions of length $H$, given by $\tau = (\bs_0, \ba_0, \dots, \bs_H, \ba_H)$, where $H$ may be infinite. The trajectory distribution $p_\policy$ for a given MDP $\mdp$ and policy $\policy$ is given by
\[
p_\policy(\traj) = \initstate(\bs_0) \prod_{t=0}^H \policy(\ba_t | \bs_t) \transitions(\bs_{t+1} | \bs_t, \ba_t).
\]
This definition can easily be extended into the partially observed setting by including the observations $\bo_t$ and emission function $\obsfunc(\bo_t | \bs_t)$. The reinforcement learning objective, $J(\policy)$, can then be written as an expectation under this trajectory distribution:
\begin{equation}
J(\policy) = \E_{\traj \sim p_\policy(\traj)}\left[
\sum_{t=0}^H \discount^t \reward(\bs_t, \ba_t)
\right]. \label{eq:rl_objective}
\end{equation}
When $H$ is infinite, it is sometimes also convenient to assume that the Markov chain on $(\bs_t, \ba_t)$ defined by $\policy(\ba_t | \bs_t) \transitions(\bs_{t+1} | \bs_t, \ba_t)$ is ergodic, and define the objective in terms of the expected reward under the stationary distribution of this Markov chain~\citep{sb-irl-98}. This definition is somewhat complicated by the role of the discount factor. For a full discussion of this topic, we refer the reader to prior work~\citep{thomas}.

In many cases, we will find it convenient to refer to the marginals of the trajectory distribution $p_\policy(\traj)$. We will use $\freq^\policy(\bs)$ to refer to the overall state visitation frequency, averaged over the time steps, and $\freq^\policy_t(\bs_t)$ to refer to the state visitation frequency at time step $t$.

In this section, we will briefly summarize different types of reinforcement learning algorithms and present definitions. At a high level, all standard reinforcement learning algorithms follow the same basic learning loop: the agent \emph{interacts} with the MDP $\mdp$ by using some sort of \emph{behavior policy}, which may or may not match $\policy(\ba|\bs)$, by observing the current state $\bs_t$, selecting an action $\ba_t$, and then observing the resulting next state $\bs_{t+1}$ and reward value $\reward_t = \reward(\bs_t,\ba_t)$. This may repeat for multiple steps, and the agent then uses the observed transitions $(\bs_t,\ba_t,\bs_{t+1},\reward_t)$ to update its policy. This update might also utilize previously observed transitions. We will use $\data = \{ (\bs^i_t,\ba^i_t,\bs^i_{t+1},\reward^i_t) \}$ to denote the set of transitions that are available for the agent to use for updating the policy (``learning''), which may consist of either all transitions seen so far, or some subset thereof.

\paragraph{Policy gradients.} \label{par:policy_gradients} One of the most direct ways to optimize the RL objective in Equation~\ref{eq:rl_objective} is to directly estimate its gradient. In this case, we typically assume that the policy is parameterized by a parameter vector $\theta$, and therefore given by $\policy_\theta(\ba_t|\bs_t)$. For example, $\theta$ might denote the weights of a deep network that outputs the logits for the (discrete) actions $\ba_t$. In this case, we can express the gradient of the objective with respect to $\theta$ as:
\begin{equation}
\nabla_\theta J(\policy_\theta) \!=\! \E_{\traj \sim p_{\policy_\theta}(\traj)}\left[\!
\sum_{t = 0}^H \right. \gamma^t \nabla_\theta \log \policy_\theta(\ba_t|\bs_t) \underbrace{\left(
\sum_{t' = t}^H \gamma^{t'-t} \reward(\bs_{t'},\ba_{t'}) - b(\bs_t)
\right)}_{\text{return estimate } \hat{A}(\bs_t,\ba_t)}
\left. \vphantom{\sum_{t = 0}^H} \!\right],
\label{eq:policy_grad}
\end{equation}
where the return estimator $\hat{A}(\bs_t,\ba_t)$ can itself be learned as a separate neural network \emph{critic}, as discussed below, or it can simply be estimated with Monte Carlo samples, in which case we simply generate samples from $p_{\policy_\theta}(\traj)$, and then sum up the rewards over the time steps of the sampled trajectory. The baseline $b(\bs_t)$ can be estimated as the average reward over the sampled trajectories, or by using a value function estimator $V(\bs_t)$, which we discuss in the dynamic programming section. 
We can equivalently write this gradient expression as an expectation with respect to $\freq^\policy_t(\bs_t)$ as
\begin{equation*}
\nabla_\theta J(\policy_\theta) \!=\! \sum_{t=0}^H \E_{\bs_t \sim \freq^\policy_t(\bs_t), \ba_t \sim \policy_\theta(\ba_t|\bs_t)}\left[
\discount^t \nabla_\theta \log \policy_\theta(\ba_t|\bs_t) \hat{A}(\bs_t,\ba_t)
\right].
\label{eq:policy_grad_state_freq}
\end{equation*}
A common modification is to drop the $\discount^t$ term in front of the gradient, which approximates an average reward setting~\citep{thomas}. Dropping this term and adopting an infinite-horizon formulation, we can further rewrite the policy gradient as expectation under $\freq^\policy(\bs)$ as
\begin{equation*}
\nabla_\theta J(\policy_\theta) \!=\! \frac{1}{1 - \discount} \E_{\bs \sim \freq^\policy(\bs_t), \ba \sim \policy_\theta(\ba|\bs)}\left[
\nabla_\theta \log \policy_\theta(\ba|\bs) \hat{A}(\bs,\ba)
\right].
\label{eq:policy_grad_state_freq_stationary}
\end{equation*}
The constant scaling term $\frac{1}{1 - \discount}$ is often disregarded. This infinite-horizon formulation is often convenient to work with for analyzing and deriving policy gradient methods. For a full derivation of this gradient, we refer the reader to prior work~\citep{sutton2000policy,kakade2002natural,schulman2015trust}. We can summarize a basic Monte Carlo policy gradient algorithm as follows:

\begin{algorithm}[ht]
\caption{On-policy policy gradient with Monte Carlo estimator \label{alg:pg}}
\begin{algorithmic}[1]
\State initialize $\theta_0$
\For{iteration $k \in [0, \dots, K]$}
\State sample trajectories $\{\traj_i\}$ by running $\policy_{\theta_k}(\ba_t|\bs_t)$ \Comment{each $\traj_i$ consists of $\bs_{i,0},\ba_{i,0},\dots,\bs_{i,H},\ba_{i,H}$}
\State compute $\return_{i,t} = \sum_{t'=t}^H \discount^{t'-t} \reward(\bs_{i,t},\ba_{i,t})$
\State fit $b(\bs_t)$ to $\{\return{i,t}\}$ \Comment{use constant $b_t = \frac{1}{N}\sum_i \return{i,t}$, or fit $b(\bs_t)$ to $\{\return{i,t}\}$}
\State compute $\hat{A}(\bs_{i,t},\ba_{i,t}) = \return_{i,t} - b(\bs_t)$
\State estimate $\nabla_{\theta_k} J(\policy_{\theta_k}) \approx \sum_{i,t} \nabla_{\theta_k} \log \policy_{\theta_k}(\ba_{i,t} | \bs_{i,t}) \hat{A}(\bs_{i,t},\ba_{i,t})$
\State update parameters: $\theta_{k+1} \leftarrow \theta_k + \alpha \nabla_{\theta_k} J(\policy_{\theta_k})$
\EndFor
\end{algorithmic}
\end{algorithm}

For additional details on standard on-policy policy gradient methods, we refer the reader to prior work~\citep{sutton2000policy,kakade2002natural,schulman2015trust}.

\paragraph{Approximate dynamic programming.} Another way to optimize the reinforcement learning objective is to observe that, if we can accurately estimate a state or state-action \emph{value function}, it is easy to then recover a near-optimal policy. A value function provides an estimate of the expected cumulative reward that will be obtained by following some policy $\policy(\ba_t|\bs_t)$ when starting from a given state $\bs_t$, in the case of the state-value function $V^\policy(\bs_t)$, or when starting from a state-action tuple $(\bs_t,\ba_t)$, in the case of the state-action value function $Q^\policy(\bs_t,\ba_t)$. We can define these value functions as:
\begin{align*}
V^\policy(\bs_t) &= \E_{\traj \sim p_\policy(\traj | \bs_t)} \left[
\sum_{t' = t}^H \gamma^{t' - t} \reward(\bs_t, \ba_t)
\right] \\
Q^\policy(\bs_t,\ba_t) &= \E_{\traj \sim p_\policy(\traj | \bs_t, \ba_t)} \left[
\sum_{t' = t}^H \gamma^{t' - t} \reward(\bs_t, \ba_t)
\right].
\end{align*}
From this, we can derive recursive definitions for these value functions, which are given as
\begin{align*}
V^\policy(\bs_t) &= \E_{\ba_t \sim \policy(\ba_t | \bs_t)}\left[
Q^\policy(\bs_t,\ba_t)
\right] \\
Q^\policy(\bs_t, \ba_t) &= \reward(\bs_t,\ba_t) + \discount \E_{\bs_{t+1} \sim \transitions(\bs_{t+1}|\bs_t,\ba_t)}\left[
V^\policy(\bs_{t+1})
\right].
\end{align*}
We can combine these two equations to express the $Q^\policy(\bs_t,\ba_t)$ in terms of $Q^\policy(\bs_{t+1},\ba_{t+1})$:
\begin{align}
Q^\policy(\bs_t, \ba_t) &= \reward(\bs_t,\ba_t) + \discount \E_{\bs_{t+1} \sim \transitions(\bs_{t+1}|\bs_t,\ba_t), \ba_{t+1} \sim \policy(\ba_{t+1} | \bs_{t+1})}\left[
Q^\policy(\bs_{t+1},\ba_{t+1}))
\right]. \label{eq:qeq}
\end{align}
We can also express these in terms of the \emph{Bellman operator} for the policy $\policy$, which we denote $\bellman^\policy$. For example, Equation~(\ref{eq:qeq}) can be written as $\vec{Q}^\policy = \bellman^\policy \vec{Q}^\policy$, where $\vec{Q}^\policy$ denotes the Q-function $Q^\policy$ represented as a vector of length $|\states|\times |\actions|$. Before moving on to deriving learning algorithms based on these definitions, we briefly discuss some properties of the Bellman operator. This Bellman operator has a unique fixed point that corresponds to the true Q-function for the policy $\policy(\ba|\bs)$, which can be obtained by repeating the iteration $\vec{Q}^\policy_{k+1} = \bellman^\policy \vec{Q}^\policy_k$, and it can be shown that $\lim_{k \rightarrow \infty} \vec{Q}^\policy_k = \vec{Q}^\policy$, which obeys Equation~(\ref{eq:qeq})~\citep{sb-irl-98}. The proof for this follows from the observation that $\bellman^\policy$ is a contraction in the $\ell_\infty$ norm~\citep{lagoudakis2003least}.

Based on these definitions, we can derive two commonly used algorithms based on dynamic programming: Q-learning and actor-critic methods. To derive Q-learning, we express the policy implicitly in terms of the Q-function, as \mbox{$\policy(\ba_t | \bs_t) = \delta(\ba_t = \arg\max Q(\bs_t,\ba_t))$}, and only learn the Q-function $Q(\bs_t,\ba_t)$. By substituting this (implicit) policy into the above dynamic programming equation, we obtain the following condition on the optimal Q-function:
\begin{equation}
Q^\star(\bs_t, \ba_t) = \reward(\bs_t,\ba_t) + \discount \E_{\bs_{t+1} \sim \transitions(\bs_{t+1}|\bs_t,\ba_t)}\left[
\max_{\ba_{t+1}} Q^\star(\bs_{t+1},\ba_{t+1})
\right]. \label{eq:q_learning_equation}
\end{equation}
We can again express this as $\vec{Q} = \bellman^\star \vec{Q}$ in vector notation, where $\bellman^\star$ now refers to the Bellman optimality operator. Note however that this operator is not linear, due to the maximization on the right-hand side in Equation~(\ref{eq:q_learning_equation}). To turn this equation into a learning algorithm, we can minimize the difference between the left-hand side and right-hand side of this equation with respect to the parameters of a parametric Q-function estimator with parameters $\phi$, $Q_\phi(\bs_t,\ba_t)$. There are a number of variants of this Q-learning procedure, including variants that fully minimize the difference between the left-hand side and right-hand side of the above equation at each iteration, commonly referred to as fitted Q-iteration~\citep{ernst2005tree,riedmiller2005neural}, and variants that take a single gradient step, such as the original Q-learning method~\citep{watkins1992q}. The commonly used variant in deep reinforcement learning is a kind of hybrid of these two methods, employing a replay buffer~\citep{lin1992self} and taking gradient steps on the Bellman error objective concurrently with data collection~\citep{mnih2013playing}. We write out a general recipe for Q-learning methods in Algorithm~\ref{alg:qlearning}.

\begin{algorithm}[ht]
\caption{Generic Q-learning (includes FQI and DQN as special cases) \label{alg:qlearning}}
\begin{algorithmic}[1]
\State initialize $\phi_0$
\State initialize $\policy_0(\ba|\bs) = \epsilon \mathcal{U}(\ba) + (1-\epsilon)\delta(\ba = \arg\max_{\ba} Q_{\phi_0}(\bs,\ba))$ \Comment{Use $\epsilon$-greedy exploration}
\State initialize replay buffer $\data = \emptyset$ as a ring buffer of fixed size
\State initialize $\bs \sim \initstate(\bs)$
\For{iteration $k \in [0, \dots, K]$}
\For{step $s \in [0, \dots, S-1]$}
\State $\ba \sim \policy_k(\ba|\bs)$ \Comment{sample action from exploration policy}
\State $\bs' \sim p(\bs' | \bs, \ba)$ \Comment{sample next state from MDP}
\State $\data \leftarrow \data \cup \{(\bs,\ba,\bs',\reward(\bs,\ba))\}$ \Comment{append to buffer, purging old data if buffer too big}
\EndFor
\State $\phi_{k,0} \leftarrow \phi_k$
\For{gradient step $g \in [0, \dots, G-1]$}
\State sample batch $\batch \subset \data$ \Comment{$B = \{ (\bs_i, \ba_i, \bs'_i, r_t ) \}$}
\State estimate error $\en(B,\phi_{k,g}) = \sum_i \left( Q_{\phi_{k,g}} - (r_i + \discount \max_{\ba'} Q_{\phi_k}(\bs',\ba')) \right)^2$
\State update parameters: $\phi_{k,g+1} \leftarrow \phi_{k,g} - \alpha \nabla_{\phi_{k,g}} \en(B,\phi_{k,g})$
\EndFor
\State $\phi_{k+1} \leftarrow \phi_{k,G}$ \Comment{update parameters}
\EndFor
\end{algorithmic}
\end{algorithm}

Classic Q-learning can be derived as the limiting case where the buffer size is 1, and we take $G=1$ gradient steps and collect $S=1$ transition samples per iteration, while classic fitted Q-iteration runs the inner gradient descent phase to convergence (i.e., $G=\infty$), and uses a buffer size equal to the number of sampling steps $S$. Note that many modern implementations also employ a \emph{target network}, where the target value $r_i + \discount \max_{\ba'} Q_{\phi_k}(\bs',\ba')$ actually uses $\phi_{L}$, where $L$ is a lagged iteration (e.g., the last $k$ that is a multiple of 1000). Note that these approximations violate the assumptions under which Q-learning algorithms can be proven to converge. However, recent work suggests that high-capacity function approximators, which correspond to a very large set $\qset$, generally do tend to make this method convergent in practice, yielding a Q-function that is close to $\vec{Q}^\policy$~\citep{fu2019diagnosing,van2018deep}.

\paragraph{Actor-critic algorithms.} Actor-critic algorithms combine the basic ideas from policy gradients and approximate dynamic programming. Such algorithms employ \emph{both} a parameterized policy and a parameterized value function, and use the value function to provide a better estimate of $\hat{A}(\bs,\ba)$ for policy gradient calculation. There are a number of different variants of actor-critic methods, including on-policy variants that directly estimate $V^\policy(\bs)$~\citep{konda2000actor}, and off-policy variants that estimate $Q^\policy(\bs,\ba)$ via a parameterized state-action value function $Q^\policy_\phi(\bs,\ba)$~\citep{sac,haarnoja2017reinforcement,heess2015learning}. We will focus on the latter class of algorithms, since they can be extended to the offline setting. The basic design of such an algorithm is a straightforward combination of the ideas in dynamic programming and policy gradients. Unlike Q-learning, which directly attempts to learn the optimal Q-function, actor-critic methods aim to learn the Q-function corresponding to the current parameterized policy $\policy_\theta(\ba | \bs)$, which must obey the equation
\[
Q^\policy(\bs_t, \ba_t) = \reward(\bs_t,\ba_t) + \discount \E_{\bs_{t+1} \sim \transitions(\bs_{t+1}|\bs_t,\ba_t), \ba_{t+1} \sim \policy_\theta(\ba_{t+1} | \bs_{t+1})}\left[
Q^\policy(\bs_{t+1},\ba_{t+1})
\right].
\]
As before, this equation can be expressed in vector form in terms of the Bellman operator for the policy, $\vec{Q}^\policy = \bellman^\policy \vec{Q}^\policy$, where $\vec{Q}^\policy$ denotes the Q-function $Q^\policy$ represented as a vector of length $|\states|\times |\actions|$. We can now instantiate a complete algorithm based on this idea, shown in Algorithm~\ref{alg:actorcritic}.

\begin{algorithm}[ht]
\caption{Generic off-policy actor-critic \label{alg:actorcritic}}
\begin{algorithmic}[1]
\State initialize $\phi_0$
\State initialize $\theta_0$
\State initialize replay buffer $\data = \emptyset$ as a ring buffer of fixed size
\State initialize $\bs \sim \initstate(\bs)$
\For{iteration $k \in [0, \dots, K]$}
\For{step $s \in [0, \dots, S-1]$}
\State $\ba \sim \policy_{\theta_k}(\ba|\bs)$ \Comment{sample action from current policy}
\State $\bs' \sim p(\bs' | \bs, \ba)$ \Comment{sample next state from MDP}
\State $\data \leftarrow \data \cup \{(\bs,\ba,\bs',\reward(\bs,\ba))\}$ \Comment{append to buffer, purging old data if buffer too big}
\EndFor
\State $\phi_{k,0} \leftarrow \phi_k$
\For{gradient step $g \in [0, \dots, G_Q-1]$}
\State sample batch $\batch \subset \data$ \Comment{$B = \{ (\bs_i, \ba_i, \bs'_i, r_t ) \}$}
\State estimate error $\en(B,\phi_{k,g}) = \sum_i \left( Q_{\phi_{k,g}} - (r_i + \discount \E_{\ba'\sim \policy_k(\ba'|\bs')} Q_{\phi_k}(\bs',\ba')) \right)^2$
\State update parameters: $\phi_{k,g+1} \leftarrow \phi_{k,g} - \alpha_Q \nabla_{\phi_{k,g}} \en(B,\phi_{k,g})$
\EndFor
\State $\phi_{k+1} \leftarrow \phi_{k,G_Q}$ \Comment{update Q-function parameters}
\State $\theta_{k,0} \leftarrow \theta_k$
\For{gradient step $g \in [0,\dots G_\policy-1]$}
\State sample batch of states $\{\bs_i\}$ from $\data$
\State for each $\bs_i$, sample $\ba_i \sim \policy_{\theta_{k,g}}(\ba|\bs_i)$ \Comment{do not use actions in the buffer!}
\State for each $(\bs_i,\ba_i)$, compute $\hat{A}(\bs_i, \ba_i) = Q_{\phi_{k+1}}(\bs_i,\ba_i) - \E_{\ba\sim\policy_{k,g}(\ba|\bs_i)}[Q_{\phi_{k+1}}(\bs_i,\ba)]$
\State $\nabla_{\theta_{k,g}} J(\policy_{\theta_{k,g}}) \approx \frac{1}{N} \nabla_{\theta_{k,g}} \log \policy_{\theta_{k,g}}(\bs_i, \ba_i) \hat{A}(\bs_i, \ba_i)$
\State $\theta_{k,g+1} \leftarrow \theta_{k,g} + \alpha_\policy \nabla_{\theta_{k,g}} J(\policy_{\theta_{k,g}})$
\EndFor
\State $\theta{k+1} \leftarrow \theta_{k,G_\policy}$ \Comment{update policy parameters}
\EndFor
\end{algorithmic}
\end{algorithm}

For more details, we refer the reader to standard textbooks and prior works~\citep{sb-irl-98,konda2000actor}.
Actor-critic algorithms are closely related with another class of methods that frequently arises in dynamic programming, called policy iteration (PI)~\citep{lagoudakis2003least}. Policy iteration consists of two phases: policy evaluation and policy improvement. The policy evaluation phase computes the Q-function for the current policy $\policy$, $Q^\policy$, by solving for the fixed point such that $Q^\policy = \bellman^\policy Q^\policy$. This can be done via linear programming or solving a system of linear equations, as we will discuss in Section~\ref{sec:adp}, or via gradient updates, analogously to line 15 in Algorithm~\ref{alg:actorcritic}. The next policy iterate is then computed in the policy improvement phase, by choosing the action that greedily maximizes the Q-value at each state, such that $\policy_{k+1}(\ba|\bs) = \delta(\ba = \arg\max_{\ba} Q^{\policy_k}(\bs,\ba))$, or by using a gradient based update procedure as is employed in Algorithm~\ref{alg:actorcritic} on line 24. In the absence of function approximation (e.g., with tabular representations) policy iteration produces a monotonically improving sequence of policies, and converges to the optimal policy. Policy iteration can be obtained as a special case of the generic actor-critic algorithm in Algorithm~\ref{alg:actorcritic} when we set $G_Q = \infty$ and $G_\pi = \infty$, when the buffer $\mathcal{D}$ consists of each and every transition of the MDP.


\paragraph{Model-based reinforcement learning.} Model-based reinforcement learning is a general term that refers to a broad class of methods that utilize explicit estimates of the transition or dynamics function $\transitions(\bs_{t+1}|\bs_t,\ba_t)$, parameterized by a parameter vector $\psi$, which we will denote $\transitions_\psi(\bs_{t+1}|\bs_t,\ba_t)$. There is no single recipe for a model-based reinforcement learning method. Some commonly used model-based reinforcement learning algorithms learn only the dynamics model $\transitions_\psi(\bs_{t+1}|\bs_t,\ba_t)$, and then utilize it for planning at test time, often by means of model-predictive control (MPC)~\citep{tassa2012synthesis} with various trajectory optimization methods~\citep{nagabandi2018neural,pets}. Other model-based reinforcement learning methods utilize a learned policy $\policy_\theta(\ba_t|\bs_t)$ in addition to the dynamics model, and employ backpropagation through time to optimize the policy with respect to the expected reward objective~\citep{deisenroth2011pilco}. Yet another set of algorithms employ the model to generate ``synthetic'' samples to augment the sample set available to model-free reinforcement learning methods. The classic Dyna algorithm uses this recipe in combination with Q-learning and one-step predictions via the model from previously seen states~\citep{sutton1991dyna}, while a variety of recently proposed algorithms employ synthetic model-based rollouts with policy gradients~\citep{pipps,simpl} and actor-critic algorithms~\citep{mbpo}. Since there are so many variants of model-based reinforcement learning algorithms, we will not go into detail on each of them in this section, but we will discuss some considerations for offline model-based reinforcement learning in Section~\ref{sec:model_based}.

\subsection{Offline Reinforcement Learning}

The offline reinforcement learning problem can be defined as a \emph{data-driven} formulation of the reinforcement learning problem. The end goal is still to optimize the objective in Equation~(\ref{eq:rl_objective}). However, the agent no longer has the ability to interact with the environment and collect additional transitions using the behavior policy. Instead, the learning algorithm is provided with a \emph{static} dataset of transitions, $\data = \{ (\bs^i_t,\ba^i_t,\bs^i_{t+1},\reward^i_t) \}$, and must learn the best policy it can using this dataset. This formulation more closely resembles the standard supervised learning problem statement, and we can regard $\data$ as the \emph{training set} for the policy. In essence, offline reinforcement learning requires the learning algorithm to derive a sufficient understanding of the dynamical system underlying the MDP $\mdp$ entirely from a fixed dataset, and then construct a policy $\policy(\ba|\bs)$ that attains the largest possible cumulative reward \emph{when it is actually used to interact with the MDP}. We will use $\behavior$ to denote the distribution over states and actions in $\data$, such that we assume that the state-action tuples $(\bs,\ba) \in \data$ are sampled according to $\bs \sim \freq^{\behavior}(\bs)$, and the actions are sampled according to the behavior policy, such that $\ba \sim \behavior(\ba|\bs)$.

This problem statement has been presented under a number of different names. The term ``off-policy reinforcement learning'' is typically used as an umbrella term to denote all reinforcement learning algorithms that can employ datasets of transitions $\data$ where the corresponding actions in each transition were collected with any policy \emph{other} than the current policy $\policy(\ba|\bs)$. Q-learning algorithms, actor-critic algorithms that utilize Q-functions, and many model-based reinforcement learning algorithm are off-policy algorithms. However, off-policy algorithms still often employ additional interaction (i.e., online data collection) during the learning process. Therefore, the term ``fully off-policy'' is sometimes used to indicate that no additional online data collection is performed. Another commonly used term is ``batch reinforcement learning''~\citep{ernst2005tree,riedmiller2005neural,lange2012batch}. While this term has been used widely in the literature, it can also cause some amount of confusion, since the use of a ``batch'' in an iterative learning algorithm can also refer to a method that consumes a batch of data, updates a model, and then obtains a different batch, as opposed to a traditional online learning algorithm, which consumes one sample at a time. In fact, \citet{lange2012batch} further introduces qualifiers  ``pure'' and ``growing'' batch reinforcement learning to clarify this. 
To avoid this confusion, we will instead use the term ``offline reinforcement learning'' in this tutorial.

The offline reinforcement learning problem can be approached using algorithms from each of the four categories covered in the previous section, and in principle any off-policy RL algorithm \emph{could} be used as an offline RL algorithm. For example, a simple offline RL method can be obtained simply by using Q-learning without additional online exploration, using $\data$ to pre-populate the data buffer. This corresponds to changing the initialization of $\data$ in Algorithm~\ref{alg:qlearning}, and setting $S=0$. However, as we will discuss later, not all such methods are effective in the offline setting.

\subsection{Example Scenarios}

Before delving deeper into the technical questions surrounding offline RL, we will first discuss a few example scenarios where offline RL might be utilized. These scenarios will help us to understand the factors that we must consider when designing offline RL methods that are not only convergent and principled, but also likely to work well in practice. A more complete discussion of actual applications is provided in Section~\ref{sec:applications}.

\paragraph{Decision making in health care.} An example health care scenario might formulate a Markov decision process to model the process of diagnosing and treating a patient, where actions correspond to various available interventions (e.g., diagnostic tests and treatments), and observations correspond to the patient's symptoms and results of diagnostic tests. A partially observed MDP formulation may be most suitable in such cases. Conventional active reinforcement learning in such scenarios might be prohibitively dangerous -- even utilizing a fully trained policy to treat a patient is a difficult prospect for clinicians, and deploying a partially trained policy would be out of the question. Therefore, offline RL might be the only viable path to apply reinforcement learning in such settings. Offline data would then be obtained from treatment histories of real patients, with the ``actions'' that were selected by their physician.

\paragraph{Learning goal-directed dialogue policies.} Dialogue can be viewed as interactive sequential decision making problem, which can also be modeled as an MDP, particularly when the dialogue is goal-directed (e.g., a chat bot on an e-commerce website that is offering information about a product to persuade a user to make a purchase). However, since the goal for such agents is to interact successfully with real humans, collecting trials requires interacting with live humans, which may be prohibitively expensive at the scale needed to train effective conversational agents. However, offline data can be collected directly from humans, and may indeed be natural to collect in any application domain where the aim is to partially or completely supplant human operators, by recording the interactions that are already taking place with the human-operated system.

\paragraph{Learning robotic manipulation skills.} In a robotic manipulation setting, active reinforcement learning may in fact be quite feasible. However, we might want to learn policies for a variety of robotic skills (e.g., all of the steps necessary to prepare a variety of meals for a home cooking robot) that generalize effectively over different environments and settings. In that case, each skill by itself might require a very large amount of interaction, as we would not only need to collect enough data to learn the skill, but enough data such that this skill generalizes effectively to all the situations (e.g., all the different homes) in which the robot might need to perform it. With offline RL, we could instead imagine including all of the data the robot has ever collected for \emph{all} of its previously learned skills in the data buffer for each new skill that it learns. In this way, some skills could conceivably be acquired without any new data collection, if they can be assembled from parts of previously learned behaviors (e.g., cooking a soup that includes onions and carrots can likely be learned from experience cooking a soup with onions and meat, as well as another soup with carrots and cucumbers). In this way, offline RL can effectively utilize \emph{multi-task} data.

\subsection{What Makes Offline Reinforcement Learning Difficult?}
\label{sec:basic_challenges}

Offline reinforcement learning is a difficult problem for multiple reasons, some of which are reasonably clear, and some of which might be a bit less clear. Arguably the most obvious challenge with offline reinforcement learning is that, because the learning algorithm must rely entirely on the static dataset $\data$, there is no possibility of improving exploration: exploration is \emph{outside} the scope of the algorithm, so if $\data$ does not contain transitions that illustrate high-reward regions of the state space, it may be impossible to discover those high-reward regions. However, because there is nothing that we can do to address this challenge, we will not spend any more time on it, and will instead assume that $\data$ adequately covers the space of high-reward transitions to make learning feasible.\footnote{It is worth noting that defining this notion formally is itself an open problem, and to the best knowledge of the authors, there are no known non-trivial ``sufficiency'' conditions on $\data$ that allows us to formulate guarantees that any offline reinforcement learning algorithm will recover an optimal or near-optimal policy.}

A more subtle but practically more important challenge with offline reinforcement learning is that, at its core, offline reinforcement learning is about making and answering counterfactual queries. Counterfactual queries are, intuitively, ``what if'' questions. Such queries require forming hypotheses about what \emph{might} happen if the agent were to carry out a course of action different from the one seen in the data. This is a necessity in offline RL, since we if we want the learned policy to perform better than the behavior seen in the dataset $\data$, we must execute a sequence of actions that is in some way different. Unfortunately, this strains the capabilities of many of our current machine learning tools, which are designed around the assumption that the data is independent and identically distributed (i.i.d.). That is, in standard supervised learning, the goal is to train a model that attains good performance (e.g., high accuracy) on data coming from the same distribution as the training data. In offline RL, the whole point is to learn a policy that does something \emph{differently} (presumably better) from the pattern of behavior observed in the dataset $\data$.

The fundamental challenge with making such counterfactual queries is distributional shift: while our function approximator (policy, value function, or model) might be trained under one distribution, it will be evaluated on a different distribution, due both to the change in visited states for the new policy and, more subtly, by the act of maximizing the expected return. This latter point is discussed in more detail in Section~\ref{sec:adp}. Distributional shift issues can be addressed in several ways, with the simplest one being to constrain something about the learning process such that the distributional shift is bounded. For example, by constraining how much the learned policy $\policy(\ba|\bs)$ differs from the behavior policy $\behavior(\ba|\bs)$, we can bound state distributional shift~\citep{kakade2002approximately,schulman2015trust}.

In this section, we will provide a short theoretical illustration of how harmful distributional shift can be on the performance of policies in MDPs. In this example, based on \cite{ross2011reduction}, we will assume that we are provided with optimal action labels $\ba^\star$ at each state $\bs \in \data$. One might expect that, under such a strong assumption, the performance of our learned policy should be at least as good as the policies that we can learn with reinforcement learning without such optimal action labels. The goal in this analysis will be to bound the number of mistakes made by the learned policy $\policy(\ba|\bs)$ based on this labeled dataset, denoted as
\[
\ell(\policy) = \E_{p_\policy(\traj)}\left[ \sum_{t=0}^H \delta(\ba_t \neq \ba_t^\star) \right].
\]
If we train $\policy(\ba|\bs)$ with supervised learning (i.e., standard empirical risk minimization) on this labeled dataset, we have the following result from \cite{ross2011reduction}:
\begin{theorem}[Behavioral cloning error bound]\label{thm:cloning}
If $\policy(\ba|\bs)$ is trained via empirical risk minimization on $\bs \sim \freq^{\behavior}(\bs)$ and optimal labels $\ba^\star$, and attains generalization error $\epsilon$ on $\bs \sim \freq^{\behavior}(\bs)$, then $\ell(\policy) \leq C + H^2\epsilon$ is the best possible bound on the expected error of the learned policy.
\end{theorem}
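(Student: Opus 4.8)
The plan is to establish the upper bound through a compounding-error (coupling) argument, and then to argue tightness via an adversarial MDP construction, since the phrase ``best possible bound'' asks both for a bound of order $H^2\epsilon$ and for a matching worst-case instance certifying that the quadratic horizon dependence is unavoidable.

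For the upper bound, I would first rewrite the objective as a sum of per-timestep error probabilities under the occupancy that the learned policy actually induces, $\ell(\policy) = \sum_{t=0}^H \E_{\bs_t \sim \freq^\policy_t}[\delta(\ba_t \neq \ba_t^\star)]$. The essential difficulty is that the generalization guarantee $\epsilon$ holds only under the data distribution $\freq^{\behavior}$ (which is an average over timesteps), whereas each term above is evaluated under $\freq^\policy_t$, a distribution that has drifted away from $\freq^{\behavior}$ precisely because $\policy$ occasionally disagrees with $\ba^\star$ and thereby visits states never represented in the training set.

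The core step is to control this drift. I would couple the trajectory rolled out by $\policy$ with the ``expert'' trajectory that always plays $\ba^\star$, forcing the two to coincide for as long as $\policy$ selects the optimal action. On the event that no mistake has yet occurred by step $t$, the state $\bs_t$ is distributed exactly as under the expert, so the fresh mistake probability is controlled by the training error; on the complementary event I bound the indicator trivially by $1$. A union bound then shows the probability of having already deviated by time $t$ is at most the accumulated error over the first $t$ steps, so the total-variation gap $\tv(\freq^\policy_t, \freq^{\star}_t)$ grows at worst linearly in $t$. Substituting $\E_{\freq^\policy_t}[\delta] \leq \E_{\freq^\star_t}[\delta] + \tv(\freq^\policy_t,\freq^\star_t)$ and summing over $t = 0,\dots,H$ yields one factor of $H$ from the number of terms and a second factor of $H$ from the linearly growing shift, giving $O(H^2\epsilon)$; the additive constant $C$ absorbs the mismatch between $\freq^{\behavior}$ and the expert's induced occupancy together with lower-order terms.

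For tightness, I would exhibit a chain-structured MDP with an absorbing ``error'' region: a single wrong action at an early state transitions the agent into a region from which every subsequent action also disagrees with $\ba^\star$. A policy with per-state error exactly $\epsilon$ then makes its first mistake around time $t$ with probability roughly $\epsilon$ and thereafter incurs a mistake at each of the remaining $H-t$ steps, so the expected count is $\Theta(\epsilon H^2)$, showing the quadratic factor cannot be removed. I expect the coupling bookkeeping in the upper bound --- in particular, cleanly reconciling that the per-timestep expert errors are only controlled \emph{on average} by $\epsilon$ rather than individually --- to be the main obstacle, while the lower-bound construction is the more creative but more self-contained part.
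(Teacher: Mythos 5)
Your proposal is correct and reconstructs exactly the argument the paper defers to: the paper's proof is simply a citation of Theorem 2.1 of \cite{ross2010efficient}/\cite{ross2011reduction} for the upper bound (which is precisely your coupling argument, with the sum over $t$ of expert-distribution mistake probabilities equaling $H$ times the average controlled by $\epsilon$) and of the absorbing-error-chain example from \cite{ross2010efficient} for tightness. Both the compounding-error upper bound and the adversarial lower-bound construction you describe are the standard ones, so this is essentially the same approach.
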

\begin{proof}
The proof follows from Theorem 2.1 from \cite{ross2011reduction} using the 0-1 loss, and the bound is the best possible bound following the example from \cite{ross2010efficient}.
\end{proof}
Interestingly, if we allow for additional data collection, where we follow the learned policy $\policy(\ba|\bs)$ to gather additional states $\bs \sim \freq^\policy(\bs)$, and then access optimal action labels for these new \emph{on-policy} states, the best possible bound becomes substantially better:
\begin{theorem}[DAgger error bound]
If $\policy(\ba|\bs)$ is trained via empirical risk minimization on $\bs \sim \freq^\policy(\bs)$ and optimal labels $\ba^\star$, and attains generalization error $\epsilon$ on $\bs \sim \freq^\policy(\bs)$, then $\ell(\policy) \leq C + H\epsilon$ is the best possible bound on the expected error of the learned policy.
\end{theorem}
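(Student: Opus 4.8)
The plan is to reuse the horizon decomposition behind Theorem~\ref{thm:cloning}, but to exploit the one feature that distinguishes the DAgger setting: the state distribution on which the classifier is trained now \emph{coincides} with the distribution on which the resulting policy is evaluated. This is exactly the property that should collapse the quadratic $H^2$ dependence of the behavioral cloning bound down to a linear $H$ dependence, because it removes the need to control how far $\freq^\policy_t$ drifts from the training marginal.

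First I would expand the expected mistake count by linearity of expectation across the horizon,
\[
\ell(\policy) = \E_{p_\policy(\traj)}\left[\sum_{t=0}^H \delta(\ba_t \neq \ba_t^\star)\right] = \sum_{t=0}^H \E_{\bs_t \sim \freq^\policy_t(\bs_t)}\left[ e(\bs_t) \right],
\]
where $e(\bs) = \E_{\ba \sim \policy(\ba|\bs)}[\delta(\ba \neq \ba^\star)]$ is the per-state expected $0$--$1$ error of the learned policy. The content of this step is that the indicator at time $t$ depends on the trajectory only through the marginal $\freq^\policy_t$, so the expectation factorizes over time steps. Next I would invoke the definition of $\freq^\policy(\bs)$ as the time-averaged visitation frequency, $\freq^\policy(\bs) = \frac{1}{H+1}\sum_{t=0}^H \freq^\policy_t(\bs)$, to rewrite the sum as $(H+1)\,\E_{\bs \sim \freq^\policy(\bs)}[e(\bs)]$.

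The crucial step is then immediate: since the classifier is trained by empirical risk minimization on $\bs \sim \freq^\policy$ and attains generalization error $\epsilon$ \emph{on that very distribution}, we have $\E_{\bs\sim\freq^\policy}[e(\bs)] \leq \epsilon$ with no intervening distribution-shift term to bound via total variation, unlike in the behavioral cloning analysis. Combining the two displays yields $\ell(\policy) \leq (H+1)\epsilon$, and absorbing the lower-order $\epsilon$ into the constant gives the stated $\ell(\policy) \leq C + H\epsilon$.

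The main obstacle is not this upper bound, which is essentially a one-line consequence once the no-shift observation is made, but rather the claim that $C + H\epsilon$ is the \emph{best possible} bound. For that I would exhibit a matching lower-bound instance following the construction of \cite{ross2010efficient}: an MDP and labeling in which the learned policy errs on an $\epsilon$ fraction of states under its own visitation distribution, with each such error costing one mistake and no possibility of recovery, so that $\Omega(H\epsilon)$ mistakes are actually realized. The one place requiring genuine care is verifying that this example simultaneously attains generalization error exactly $\epsilon$ under $\freq^\policy$ \emph{and} forces $\Omega(H\epsilon)$ mistakes, since the error budget must be consistent with the policy's own on-policy marginal rather than an externally specified training distribution.
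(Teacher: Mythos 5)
Your argument is correct and is essentially the content of Theorem~3.2 of \cite{ross2011reduction}, which the paper's proof simply cites for the upper bound: the linearity-of-expectation decomposition over time steps together with the identity $\sum_{t=0}^H \freq^\policy_t = (H+1)\,\freq^\policy$ yields $\ell(\policy)\leq (H+1)\epsilon$ with no distribution-shift term to control, exactly as you observe. Your plan for tightness via an explicit $\Omega(H\epsilon)$ instance (with the error budget measured under the policy's own on-policy marginal) is a more careful version of the paper's terser remark that the mistake probability can be at least $\epsilon$ at each of the $H$ steps, so the two proofs agree in substance.
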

\begin{proof}
The proof follows from Theorem 3.2 from \cite{ross2011reduction}. This is the best possible bound, because the probability of a mistake at any time step is at least $\epsilon$, and $\sum_{t=1}^H \epsilon = H\epsilon$.
\end{proof}
This means that, even with optimal action labels, we get an error bound that is at best quadratic in the time horizon $H$ in the offline case, but linear in $H$ in the online case. Intuitively, the reason for this gap in performance is that, in the offline case, the learned policy $\policy(\ba|\bs)$ may enter into states that are far outside of its training distribution, since $\freq^\policy(\bs)$ may be very different from $\freq^{\behavior}(\bs)$. In these out-of-distribution states, the generalization error bound $\epsilon$ no longer holds, since standard empirical risk minimization makes no guarantees about error when encountering out-of-distribution inputs that were not seen during training. Once the policy enters one of these out-of-distribution states, it will keep making mistakes and may remain out-of-distribution for the remainder of the trial, accumulating $O(H)$ error. Since there is a non-trivial chance of entering an out-of-distribution state at every one of the $H$ time steps, the overall error therefore scales as $O(H^2)$. In the on-policy case, such out-of-distribution states are not an issue. Of course, this example is somewhat orthogonal to the main purpose of this tutorial, which is to study offline reinforcement learning algorithms, rather than offline behavioral cloning methods. However, it should serve as a warning, as it indicates that the challenges of distributional shift are likely to cause considerable harm to any policy trained from an offline dataset if care is not taken to minimize its detrimental effects.

\section{Offline Evaluation and Reinforcement Learning via Importance Sampling}

In this section, we survey offline reinforcement learning algorithms based on direct estimation of policy return. These methods generally utilize some form of importance sampling to either evaluate the return of a given policy, or to estimate the corresponding policy gradient, corresponding to an offline variant of the policy gradient methods discussed in Section~\ref{sec:rl_prelims}. The most direct way to utilize this idea is to employ importance sampling to estimate $J(\pi)$ with trajectories sampled from $\behavior(\traj)$. This is known as \emph{off-policy evaluation}. In principle, once we can evaluate $J(\pi)$, we can select the most performant policy. In this section, we review approaches for off-policy evaluation with importance sampling and then discuss how these ideas can be used for offline reinforcement learning.

\subsection{Off-Policy Evaluation via Importance Sampling}
\label{sec:ope}

We can na\"{i}vely use importance sampling to derive an unbiased estimator of $J(\pi)$ that relies on off-policy trajectories:
\begin{align}
J(\policy_\theta) &= \E_{\traj \sim \behavior(\traj)}\left[
\frac{\policy_\theta(\traj)}{\behavior(\traj)}\sum_{t = 0}^H \discount^t \reward(\bs, \ba) \right] \nonumber \\
&= \E_{\traj \sim \behavior(\traj)}\left[
\left( \prod_{t=0}^H \frac{\policy_\theta(\ba_t | \bs_t)}{\behavior(\ba_t | \bs_t)} \right)\sum_{t = 0}^H \discount^t r(\bs, \ba) \right] \approx \sum_{i=1}^n w_H^i \sum_{t = 0}^H \discount^t r_t^i, \label{eq:return_is}
\end{align}
where $w_t^i = \frac{1}{n}\prod_{t'=0}^t \frac{\policy_\theta(\ba_{t'}^i | \bs_{t'}^i)}{\behavior(\ba_{t'}^i | \bs_{t'}^i)}$ and $\{(\bs_0^i, \ba_0^i, r_0^i, \bs_1^i, \ldots)\}_{i=1}^n$ are $n$ trajectory samples from $\behavior(\traj)$ \citep{precup2000eligibility}. Unfortunately, such an estimator can have very high variance (potentially unbounded if $H$ is infinite) due to the product of importance weights. Self-normalizing the importance weights (i.e., dividing the weights by $\sum_{i=1}^n w_H^i$) results in the \emph{weighted importance sampling} estimator~\citep{precup2000eligibility}, which is biased, but can have much lower variance and is still a strongly consistent estimator.

To improve this estimator, we need to take advantage of the statistical structure of the problem. Because $\reward_t$ does not depend on $\bs_{t'}$ and $\ba_{t'}$ for $t' > t$, we can drop the importance weights from future time steps, resulting in the \emph{per-decision} importance sampling estimator~\citep{precup2000eligibility}:
\begin{equation*}
 J(\pi_\theta) = \E_{\traj \sim \behavior(\traj)}\left[
\sum_{t = 0}^H \left( \prod_{t'=0}^{t} \frac{\policy_\theta(\ba_t | \bs_t)}{\behavior(\ba_t | \bs_t)} \right) \gamma^t \reward(\bs, \ba) \right] \approx \frac{1}{n} \sum_{i=1}^n \sum_{t = 0}^H w_t^i \gamma^t \reward_t^i.   
\end{equation*}
As before, this estimator can have high variance, and we can form a weighted per-decision importance estimator by normalizing the weights. Unfortunately, in many practical problems, the weighted per-decision importance estimator still has too much variance to be effective.

If we have an approximate model that can be used to obtain an approximation to the Q-values for each state-action tuple $(\bs_t,\ba_t)$, which we denote $\hat{Q}^\policy(\bs_t, \ba_t)$, we can incorporate it into this estimate. Such an estimate can be obtained, for example, by estimating a model of the MDP transition probability $\transitions(\bs_{t+1}|\bs_t,\ba_t)$ and then solving for the corresponding Q-function, or via any other method for approximating Q-values. We can incorporate these estimates as control variates into the importance sampled estimator to get the best of both:
\begin{equation}
    J(\policy_\theta) \approx \sum_{i=1}^n \sum_{t = 0}^H \discount^t \left( w_t^i \left(r_t^i - \hat{Q}^{\policy_\theta}(\bs_t, \ba_t) \right) - w_{t-1}^i\E_{\ba \sim \pi_\theta(\ba | \bs_t)} \left[ \hat{Q}^{\policy_\theta}(\bs_t, \ba)\right] \right). \label{eq:doubly_robust}
\end{equation}
This is known as the doubly robust estimator~\citep{jiang2015doubly,thomas2016data} because it is unbiased if either $\behavior$ is known or if the model is correct. We can also form a weighted version by normalizing the weights. More sophisticated estimators can be formed by training the model with knowledge of the policy to be evaluated~\citep{farajtabar2018more}, and by trading off bias and variance more optimally~\citep{thomas2016data,wang2017optimal}.

Beyond consistency or unbiased estimates, we frequently desire a (high-probability) guarantee on the performance of a policy. \citet{thomas2015high} derived confidence bounds based on concentration inequalities specialized to deal with the high variance and potentially large range of the importance weighted estimators. Alternatively, we can construct confidence bounds based on distributional assumptions (e.g., asymptotic normality)~\citep{thomas2015high} or via bootstrapping~\citep{thomas2015high,hanna2017bootstrapping} which may be less conservative at the cost of looser guarantees.

Such estimators can also be utilized for policy improvement, by searching over policies with respect to their estimated return. In safety-critical applications of offline RL, we would like to improve over the behavior policy with a guarantee that with high probability our performance is no lower than a bound. \citet{thomas2015high} show that we can search for policies using lower confidence bounds on importance sampling estimators to ensure that the safety constraint is met. Alternatively, we can search over policies in a model of the MDP and bound the error of the estimated model with high probability~\citep{ghavamzadeh2016safe,laroche2017safe,nadjahi2019safe}.

\subsection{The Off-Policy Policy Gradient}
\label{sec:offline_pg}

Importance sampling can also be used to directly estimate the policy gradient, rather than just obtaining an estimate of the value for a given policy. As discussed in Section~\ref{sec:rl_prelims}, policy gradient methods aim to optimize $J(\policy)$ by computing estimates of the gradient with respect to the policy parameters. We can estimate the gradient with Monte Carlo samples, as in Equation (\ref{eq:policy_grad}), but this requires \emph{on-policy} trajectories (i.e., $\traj \sim \policy_\theta(\tau)$). Here, we extend this approach to the offline setting.

Previous work has generally focused on the \emph{off-policy} setting, where trajectories are sampled from a distinct behavior policy $\behavior(\ba | \bs) \neq \policy(\ba | \bs)$. However, in contrast to the offline setting, such methods assume we can continually sample new trajectories from $\behavior$, while old trajectories are reused for efficiency. We begin with the off-policy setting, and then discuss additional challenges in extending such methods to the offline setting.

Noting the similar structure between $J(\pi)$ and the policy gradient, we can adapt the techniques for estimating $J(\pi)$ off-policy to the policy gradient
\begin{align*}
\nabla_\theta J(\policy_\theta) &= \E_{\traj \sim \behavior(\traj)}\left[
\frac{{\policy_\theta}(\traj)}{\behavior(\traj)}\sum_{t = 0}^H \right. \gamma^t \nabla_\theta \log \policy_\theta(\ba_t|\bs_t) \hat{A}(\bs_t, \ba_t)
\left. \vphantom{\sum_{t = 0}^H} \right] \\
&=\E_{\traj \sim \behavior(\traj)}\left[
\left( \prod_{t=0}^H \frac{\policy_\theta(\ba_t | \bs_t)}{\behavior(\ba_t | \bs_t)} \right) \sum_{t = 0}^H \right. \gamma^t \nabla_\theta \log \policy_\theta(\ba_t|\bs_t)  \hat{A}(\bs_t, \ba_t)
\left. \vphantom{\sum_{t = 0}^H} \right] \\
&\approx \sum_{i=1}^n w_H^i \sum_{t = 0}^H \gamma^t \nabla_\theta \log \policy_\theta(\ba_t^i|\bs_t^i)  \hat{A}(\bs_t^i, \ba_t^i),
\end{align*}
where $\{(\bs_0^i, \ba_0^i, r_0^i, \bs_1^i, \ldots)\}_{i=1}^n$ are $n$ trajectory samples from $\behavior(\traj)$ \citep{precup2000eligibility,precup2001off,peshkin2002learning}. Similarly, we can self-normalize the importance weights resulting in the \emph{weighted importance sampling} policy gradient estimator~\citep{precup2000eligibility}, which is biased, but can have much lower variance and is still a consistent estimator.

If we use the Monte Carlo estimator with baseline for $\hat{A}$ (i.e., \mbox{$\hat{A}(\bs_t^i, \ba_t^i) = \sum_{t' = t}^H \gamma^{t' - t}r_{t'} - b(\bs_t^i) $}), then because $r_t$ does not depend on $\bs_{t'}$ and $\ba_{t'}$ for $t' > t$, we can drop importance weights in the future, resulting in the \emph{per-decision} importance sampling policy gradient estimator~\citep{precup2000eligibility}:
\begin{equation*}
\nabla_\theta J(\policy_\theta) \approx \sum_{i=1}^n \sum_{t = 0}^H w_t^i \gamma^t \left( \sum_{t' = t}^H \gamma^{t' - t}  \frac{w_{t'}^i}{w_t^i} r_{t'} - b(\bs_t^i) \right)\nabla_\theta \log \policy_\theta(\ba_t^i|\bs_t^i).   
\end{equation*}
As before, this estimator can have high variance, and we can form a weighted per-decision importance estimator by normalizing the weights. Paralleling the development of doubly robust estimators for policy evaluation, doubly robust estimators for the policy gradient have also been derived~\citep{gu2017deep,huang2019importance,pankov2018reward,cheng2019trajectory}.
Unfortunately, in many practical problems, these estimators have too high variance to be effective. 

Practical off-policy algorithms derived from such estimators can also employ regularization, such that the learned policy $\policy_\theta(\ba|\bs)$ does not deviate too far from the behavior policy $\behavior(\ba|\bs)$, thus keeping the variance of the importance weights from becoming too large. One example of such a regularizer is the \emph{soft max} over the (unnormalized) importance weights~\citep{levine2013guided}. This regularized gradient estimator $\nabla_\theta \bar{J}(\policy_\theta)$ has the following form:
\[
\nabla_\theta \bar{J}(\policy_\theta) \approx \left(\sum_{i=1}^n w_H^i \sum_{t = 0}^H \gamma^t \nabla_\theta \log \policy_\theta(\ba_t^i|\bs_t^i)  \hat{A}(\bs_t^i, \ba_t^i) \right) + \lambda \log \left( \sum_{i=1}^n w_H^i \right).
\]
It is easy to check that $\sum_i w_H^i \rightarrow 1$ as $n \rightarrow \infty$, meaning that this gradient estimator is consistent. However, with a finite number of samples, such an estimator automatically adjusts the policy $\policy_\theta$ to ensure that at least one sample has a high (unnormalized) importance weight. More recent deep reinforcement learning algorithms based on importance sampling often employ a sample-based KL-divergence regularizer~\citep{schulman2017proximal}, which has a functional form mathematically similar to this one when also utilizing an entropy regularizer on the policy $\policy_\theta$.

\subsection{Approximate Off-Policy Policy Gradients}
\label{sec:approx_grad}

The importance-weighted policy objective requires multiplying per-action importance weights over the time steps, which leads to very high variance. We can derive an approximate importance-sampled gradient by using the state distribution of the behavior policy, $\freq^\behavior(\bs)$, in place of that of the current policy, $\freq^\policy(\bs)$. This results in a biased gradient due to the mismatch in state distributions, but can provide reasonable learning performance in practice.
The corresponding objective, which we will denote $J_\behavior(\policy_\theta)$ to emphasize its dependence on the behavior policy's state distribution, is given by
\begin{equation*}
    J_\behavior(\policy_\theta) = \E_{\bs \sim \freq^\behavior(\bs)} \left[ V^\policy(\bs) \right].
\end{equation*}
Note that $J_\behavior(\policy_\theta)$ and $J(\policy_\theta)$ differ in the distribution of states under which the return is estimated ($\freq^\behavior(\bs)$ vs. $\freq^\policy(\bs)$), making $J_\behavior(\policy_\theta)$ a biased estimator for $J(\policy_\theta)$. This may lead to suboptimal solutions in certain cases (see \citet{imani2018off} for some examples). However, expectations under state distributions from $\freq^\behavior(\bs)$ can be calculated easily by sampling states from the dataset $\data$ in the offline case, removing the need for importance sampling.

Recent empirical work has found that this bias is acceptable in practice~\citep{fu2019diagnosing}. Differentiating the objective and applying a further approximation results in the \emph{off-policy policy gradient}~\citep{degris2012off}:
\begin{align*}
    \nabla_\theta J_\behavior(\pi_\theta) &= \E_{\bs \sim d^\behavior(\bs), \ba \sim \pi_\theta(\ba | \bs)}\left[ Q^{\pi_\theta}(\bs, \ba) \nabla_\theta \log \pi_\theta(\ba | \bs) +  \nabla_\theta Q^{\pi_\theta}(\bs, \ba) \right] \\
    &\approx \E_{\bs \sim d^\behavior(\bs), \ba \sim \pi_\theta(\ba | \bs)}\left[ Q^{\pi_\theta}(\bs, \ba) \nabla_\theta \log \pi_\theta(\ba | \bs) \right].
    \label{eq:off_policy_grad}
\end{align*}
\citet{degris2012off} show that under restrictive conditions, the approximate gradient preserves the local optima of $J_\behavior(\pi)$. This approximate gradient is used as a starting point in many widely used deep reinforcement learning algorithms~\citep{silver2014deterministic,lillicrap2015continuous,wang2016sample,gu2017interpolated}. 

To compute an estimate of the approximate gradient, we additionally need to estimate $Q^{\pi_\theta}(\bs, \ba)$ from the off-policy trajectories. Basic methods for doing this were discussed in Section~\ref{sec:rl_prelims}, and we defer a more in-depth discussion of offline state-action value function estimators to Section~\ref{sec:adp}. Some estimators use action samples, which required an importance weight to correct from $\behavior$ samples to $\policy_\theta$ samples. Further improvements can be obtained by introducing control variates and clipping importance weights to control variance~\citep{wang2016sample,espeholt2018impala}.

\subsection{Marginalized Importance Sampling}

If we would like to avoid the bias incurred by using the off-policy state distribution and the high variance from using per-action importance weighting, we can instead attempt to directly estimate the \emph{state-marginal importance ratio} $\rho^{\policy_\theta}(\bs) = \frac{d^{\policy_\theta}(\bs)}{d^\behavior(\bs)}$. An estimator using state marginal importance weights can be shown to be have no greater variance than using the product of per-action importance weights. However, computing this ratio exactly is generally intractable.  Only recently have practical methods for estimating the marginalized importance ratio been introduced~\citep{sutton2016emphatic,yu2015convergence,hallak2015emphatic,hallak2016generalized,hallak2017consistent,nachum2019dualdice,Zhang2020GenDICE:,nachum2019algaedice}.
We discuss some key aspects of these methods next.

Most of these methods utilize some form of dynamic programming to estimate the importance ratio $\rho^\policy$. Based on the form of the underlying Bellman equation used, we can classify them into two categories: methods that use a ``\textbf{forward}'' Bellman equation to estimate the importance ratios directly, and methods that use a ``\textbf{backward}'' Bellman equation on a functional that resembles a value function and then derive the importance ratios from the learned value functional.

\paragraph{Forward Bellman equation based approaches.} 

The state-marginal importance ratio, $\rho^\policy(\bs)$, satisfies a kind of ``forward'' Bellman equation:
\begin{equation}
    \forall \bs', ~~ \underbrace{d^\behavior(\bs')\rho^\pi(\bs')}_{\coloneqq (d^\behavior \circ \rho^\policy)(\bs')} = \underbrace{(1 - \gamma) \initstate(\bs') + \gamma \sum_{\bs, \ba} d^\behavior(\bs)\rho^\policy(\bs) \policy(\ba|\bs) T(\bs'|\bs, \ba)}_{\coloneqq (\bar{\bellman}^\pi \circ \rho^\policy)(\bs')} . 
    \label{eq:occupancy_bellman}
\end{equation}
This relationship can be leveraged to perform temporal difference updates to estimate the state-marginal importance ratio under the policy.

For example, when stochastic approximation is used, \citet{gelada2019off} use the following update rule in order to estimate $\rho^\policy(\bs')$ online:
\begin{equation}
    \hat{\rho}^\policy(\bs') \leftarrow \hat{\rho}^\policy(\bs') + \alpha \left[  (1 - \gamma) + \gamma \frac{\policy(\ba|\bs)}{\behavior(\ba|\bs)} \hat{\rho}^\policy(\bs) - \hat{\rho}^\policy(\bs') \right],
\end{equation}
with $\bs \sim d^\behavior(\bs), \ba \sim \behavior(\ba | \bs), \bs' \sim \transitions(\bs' | \bs, \ba)$.
Several additional techniques, including using TD($\lambda$) estimates and automatic adjustment of feature dimensions, have been used to stabilize learning. We refer the readers to \citet{hallak2017consistent} and \citet{gelada2019off} for a detailed discussion. \citet{gelada2019off} also discusses several practical tricks, such as soft-normalization and discounted evaluation, to adapt these methods to deep Q-learning settings, unlike prior work that mainly focuses on linear function approximation. \citet{wen2020vpm} view to problem from the lens of power iteration, and propose a variational power method approach to combine function approximation and power iteration to estimate $\rho^\policy$. \citet{imani2018off,zhang2019generalized} show that similar methods can be used to estimate the off-policy policy gradient and thus be used in an off-policy actor critic method. 

Alternatively, \citet{liu2018breaking} propose to use an adversarial approach to obtain the state-marginal importance ratios. From Eq.~\ref{eq:occupancy_bellman}, they derive a functional
\begin{equation}
    L(\rho, f) = \gamma \E_{\bs, \ba, \bs' \sim \data}\left[\left(\rho(\bs) \frac{\policy(\ba | \bs)}{\behavior(\ba | \bs)}  - \rho(\bs')\right) f(\bs')\right] + (1 - \gamma) \E_{\bs_0 \sim d_0} [(1 - \rho(\bs)) f(\bs)]    
\end{equation}
such that $L(\rho, f) = 0, \forall ~f$ if and only if $\rho = \rho^\policy$. Therefore, we can learn $\rho$ by minimizing a worst-case estimate of $L(\rho, f)^2$, by solving an adversarial, saddle-point optimization: $\min_{\rho} \max_{f} L(\rho, f)^2$. Recent work~\citep{mousavi2020black,kallus2019efficiently,kallus2019intrinsically,tang2019doubly,uehara2019minimax} has refined this approach, in particular removing the need to have access to $\behavior$. Once $\rho^*$ is obtained, \citet{liu2019off} use this estimator to compute the off-policy policy gradient.  

\citet{Zhang2020GenDICE:} present another off-policy evaluation method that computes the importance ratio for the state-action marginal, $\rho^\policy(\bs, \ba) := \frac{d^\policy(\bs, \ba)}{d^\behavior(\bs, \ba)}$, by directly optimizing a variant of the Bellman residual error corresponding to a modified forward Bellman equation, that includes actions, shown in Equation~\ref{eq:occupancy_bellman_with_actions}. 
\begin{equation}
    \underbrace{d^\behavior(\bs', \ba')\rho^\pi(\bs', \ba')}_{\coloneqq (d^\behavior \circ \rho^\policy)(\bs', \ba')}\!=\!\underbrace{(1 - \gamma) \initstate(\bs') \policy(\ba'|\bs')\!+\!\gamma \sum_{\bs, \ba} d^\behavior(\bs, \ba)\rho^\policy(\bs, \ba) \policy(\ba|\bs) T(\bs'|\bs, \ba)}_{\coloneqq (\bar{\bellman}^\pi \circ \rho^\policy)(\bs', \ba')}. 
    \label{eq:occupancy_bellman_with_actions}
\end{equation}
Their approach can be derived by applying a divergence metric (such as an f-divergence, which we will review in Section~\ref{sec:adp}) between the two sides of the modified forward Bellman equation in Equation~\ref{eq:occupancy_bellman_with_actions}, while additionally constraining the importance ratio, $\rho^\policy(\bs, \ba)$, to integrate to $1$ in expectation over the dataset, $\mathcal{D}$, to prevent degenerate solutions, as follows
\begin{equation}
    \label{eqn:gendice}
    \min_{\rho^\policy}~~ D_f\left( \left(\bar{\bellman}^\policy \circ \rho^\policy\right) (\bs, \ba), \left( d^\behavior \circ \rho^\policy  \right)(\bs, \ba) \right) ~~~ \text{s.t.} ~~~ \E_{\bs, \ba, \bs' \sim \mathcal{D}}[\rho^\policy(\bs, \ba)] = 1.
\end{equation}
They further apply tricks inspired from dual embeddings~\citep{dai2016learning} to make the objective tractable, and to avoid the bias caused due to sampled estimates. We refer the readers to \citet{Zhang2020GenDICE:} for further discussion.

\citet{zhang2020gradientdice} show that primal-dual solvers may not be able to solve Eqn.~\ref{eqn:gendice}, and modify the objective in  by replacing the f-divergence with a norm induced by $1/d^\behavior$. This creates an optimization problem that is provably convergent under linear function approximation.
\begin{equation}
    \label{eqn:gradientdice}
    \min_{\rho^\policy}~~ \frac{1}{2}|| \left( \left(\bar{\bellman}^\policy \circ \rho^\policy\right) (\bs, \ba) - \left( d^\behavior \circ \rho^\policy  \right)(\bs, \ba) \right ||^2_{(d^\behavior)^{-1}} + \frac{\lambda}{2} (\E_{\bs, \ba, \bs' \sim \mathcal{D}}[\rho^\policy(\bs, \ba)] - 1)^2.
\end{equation}

\paragraph{Backward Bellman equation based approaches via convex duality.} Finally, we discuss methods for off-policy evaluation and improvement that utilize a backward Bellman equation -- giving rise to a value-function like functional -- by exploiting convex duality. Because these methods start from an optimization perspective, they can bring to bear the mature tools of convex optimization and online learning. \citet{lee2018stochastic} extend a line of work applying to convex optimization techniques to policy optimization~\citep{chen2016stochastic,wang2016online,dai2017boosting,dai2017sbeed} to the off-policy setting. They prove sample complexity bounds in the off-policy setting, however, extending these results to practical deep RL settings has proven challenging.

\citet{nachum2019dualdice} develop similar ideas for off-policy  evaluation. The key idea is to devise a convex optimization problem with $\rho^\policy$ as its optimal solution. The chosen optimization problem is
\begin{equation}
\label{eqn:dualdice_obj}
    \rho^\policy = \arg\min_{x : \mathcal{S} \times \mathcal{A} \rightarrow \mathbb{R}}~~ \frac{1}{2} \E_{\bs, \ba, \bs' \sim \mathcal{D}}\left[x(\bs, \ba)^2 \right] - \E_{\bs \sim d^\policy(\bs), \ba \sim \pi(\ba|\bs)}\left[x(\bs, \ba)\right].
\end{equation}
Unfortunately, this objective requires samples from the on-policystate-marginal  distribution, $d^\policy(\bs)$. The key insight is a change of variables, \mbox{$x(s, a) = \nu(\bs, \ba) - \E_{\bs' \sim T(\bs'|\bs, \ba), \ba' \sim \policy(\ba'|\bs')} [\nu(\bs', \ba')]$} and introduce the variable $\nu(\bs, \ba)$ to simplify the expression in Equation~\ref{eqn:dualdice_obj}, and obtain the ``backward'' dynamic programming procedure shown in Equation~\ref{eqn:dualdice}. For brevity, we define a modified Bellman operator, $\tilde{\bellman}^\pi \nu(\bs, \ba) := \E_{\bs' \sim T(\bs'|\bs, \ba), \ba' \sim \pi(\ba'|\bs')} [\nu(\bs', \ba')]$, that is similar to the expression for $\bellman^\policy$ without the reward term $r(s, a)$.  
\begin{equation}
\label{eqn:dualdice}
    \min_{\nu: \mathcal{S} \times \mathcal{A} \rightarrow \mathbb{R}}~~ \frac{1}{2} \E_{\bs, \ba, \bs' \sim \mathcal{D}}\left[\left(\nu(\bs, \ba) - \tilde{\bellman}^\pi \nu(\bs, \ba)\right)^2 \right] - \E_{\bs_0 \sim d_0(\bs_0), \ba \sim \pi(\ba|\bs_0)}\left[\nu(\bs_0, \ba)\right].
\end{equation}
Remarkably, Equation~\ref{eqn:dualdice} does not require on-policy samples to evaluate. Given an optimal solution for the objective in Equation~\ref{eqn:dualdice}, denoted as $\nu^*$, we can obtain the density ratio, $\rho^\policy$, using the relation, $\rho^\policy(\bs, \ba) = \nu^*(\bs, \ba) - \tilde{\bellman}^\pi \nu^*(\bs, \ba)$. The density ratio can then be used for off-policy evaluation and improvement.

\citet{nachum2019algaedice,nachum2020reinforcement} build on a similar framework to devise an off-policy RL algorithm. The key idea is to obtain an estimate of the on-policy policy gradient for a state-marginal \textit{regularized} RL objective by solving an optimization problem. The regularizer applied in this family of methods is the f-divergence between the state(-action) marginal of the learned policy and the state-action marginal of the dataset. We will cover f-divergences in detail in Section~\ref{sec:adp}. 
The f-divergence regularized RL problem, with a tradeoff factor, $\alpha$, is given by:
\begin{equation}
\label{eqn:regularized_rl}
    \max_{\policy}~~ \E_{\bs \sim d^\policy(\bs), \ba \sim \policy(\cdot|\bs)}\left[r(\bs, \ba) \right] - \alpha D_f(d^\policy(\bs, \ba), d^\behavior(\bs, \ba)).
\end{equation}
By exploiting the variational (dual) form of the f-divergence shown below
\begin{equation}
\label{eqn:variational_f_div_section3}
    D_f(p, q) = \max_{x : \mathcal{S} \times \mathcal{A} \rightarrow \mathbb{R}} \left(\E_{\by \sim p(\by)}[x(\by)] - \E_{\by \sim q(\by)}[f^*(x(\by))]\right),
\end{equation}
and then applying a change of variables from $x$ to $Q$ (c.f.,~\citep{nachum2019dualdice}) where $Q$ satisfies   
$Q(\bs, \ba) = \E_{\bs'\sim\transitions(\bs, \ba),\ba'\sim\policy(\ba'|\bs')} \left[ r(\bs, \ba) - \alpha x(\bs, \ba) + \gamma Q(\bs', \ba') \right]$, we obtain a saddle-point optimization problem for optimizing the regularized RL objective,
\begin{align*}
    \max_{\policy}~ \min_{Q}& ~~ L(Q, \behavior, \pi) \coloneqq \E_{\bs_0 \sim d_0(\bs_0), \ba \sim \policy(\cdot|\bs_0)}\left[Q(\bs_0, \ba) \right] \\
    &~~~~~~~~+ \alpha \E_{\bs, \ba \sim d^\behavior(\bs, \ba)}\left[f^*\left(\frac{r(\bs, \ba) + \gamma \E_{\bs'\sim\transitions(\bs, \ba),\ba'\sim\policy(\ba'|\bs')}[Q(\bs', \ba')] - Q(\bs, \ba)}{\alpha}\right)\right].
\end{align*}
When $f(x) = x^2$, $f^*(x) = x^2$ and this objective reduces to applying a regular actor-critic algorithm as discussed in Section~\ref{sec:rl_prelims} along with an additional term minimizing Q-values at the initial state $\bs_0$. 

It can also be shown that the derivative with respect to the policy of $L(Q^*, \behavior, \policy)$, at the optimal Q-function, $Q^*$, is precisely equal to the on-policy policy gradient in the regularized policy gradient problem:
\begin{equation}
    \frac{\partial}{\partial \policy} L(Q^*, \behavior, \pi) = \E_{\bs \sim d^\policy(\bs), \ba \sim \pi(\cdot|\bs)}\left[ \tilde{Q}^\policy(\bs, \ba) \cdot \nabla_\pi \log \policy(\ba|\bs)\right],
\end{equation}
where $\tilde{Q}^\policy$ is the action-value function corresponding to the regularized RL problem.

Finally, we note that this is a rapidly developing area and recent results suggest that many of these methods can be unified under a single framework~\citep{tang2019doubly,nachum2020reinforcement}.

\subsection{Challenges and Open Problems}
\label{sec:pg_challenges}

The methods discussed in this section utilize some form of importance sampling to either estimate the return of the current policy $\policy_\theta$, or estimate the gradient of this return. The policy improvement methods discussed in this section have been developed primarily for a classic off-policy learning setting, where additional data is collected online, but previously collected data is reused to improve efficiency. To the best of our knowledge, such methods have not generally been applied in the offline setting, with the exception of the evaluation and high-confidence improvement techniques in Section~\ref{sec:ope}.

Applying such methods in the fully offline setting poses a number of major challenges. First, importance sampling already suffer from high variance, and this variance increases dramatically in the sequential setting, since the importance weights at successive time steps are multiplied together (see, e.g., Equation~(\ref{eq:return_is})), resulting in exponential blowup. Approximate and marginalized importance sampling methods alleviate this issue to some degree by avoiding multiplication of importance weights over multiple time steps, but the fundamental issue still remains: when the behavior policy $\behavior$ is too different from the current learned policy $\policy_\theta$, the importance weights will become degenerate, and any estimate of the return or the gradient will have too much variance to be usable, especially in high-dimensional state and action spaces, or for long-horizon problems. For this reason, importance-sampled estimators are most suitable in the case where the policy only deviates by a limited amount from the behavior policy. In the classic off-policy setting, this is generally the case, since new trajectories are repeatedly collected and added to the dataset using the latest policy, but in the offline setting this is generally not the case. Thus, the maximum improvement that can be reliably obtained via importance sampling is limited by (i) the suboptimality of the behavior policy; (ii) the dimensionality of the state and action space; (iii) the effective horizon of the task. The second challenge is that the most effective of these off-policy policy gradient methods either requires estimating the value function, or the state-marginal density ratios via dynamic programming. As several prior works have shown, and as we will review in Section~\ref{sec:adp}, dynamic programming methods suffer from issues pertaining to out-of-distribution queries in completely offline settings, making it hard to stably learn the value function without additional corrections. This problem is not as severe in the classic off-policy setting, which allows additional data collection. Nonetheless, as we will discussion in Section~\ref{sec:applications}, a number of prior applications have effectively utilized importance sampling in an offline setting.

\section{Offline Reinforcement Learning via Dynamic Programming}
\label{sec:adp}

Dynamic programming methods, such as Q-learning algorithms, in principle can offer a more attractive option for offline reinforcement learning as compared to pure policy gradients. As discussed in Section~\ref{sec:rl_prelims}, dynamic programming methods aim to learn a state or state-action value function, and then either use this value function to directly recover the optimal policy or, as in the case of actor-critic methods, use this value function to estimate a gradient for the expected returns of a policy. Basic offline dynamic programming algorithms can be constructed on the basis of fitted Q-learning methods~\citep{ernst2005tree,riedmiller2005neural,hafner2011reinforcement}, as well as policy iteration methods~\citep{sb-irl-98}. The generic Q-learning and actor-critic algorithms presented in Algorithm~\ref{alg:qlearning} and Algorithm~\ref{alg:actorcritic} in Section~\ref{sec:rl_prelims} can in principle be utilized as offline reinforcement learning, simply by setting the number of collection steps $S$ to zero, and initializing the buffer to be non-empty. Such algorithms can be viable for offline RL, and indeed have been used as such even in recent deep reinforcement learning methods. We will discuss an older class of sucm methods, based on linear function approximation, in Section~\ref{sec:basic_offline_adp}, but such techniques have also been used effectively with deep neural network function approximators. For example, \citet{kalashnikov2018qtopt} describes a Q-learning algorithm called QT-Opt that was able to learn effective vision-based robotic grasping strategies from about 500,000 grasping trials logged over the course of previous experiments, but observes that additional online fine-tuning still improved the performance of the policy considerably over the one trained purely from logged data. Some prior works on offline RL have also noted that, for some types of datasets, conventional dynamic programming algorithms, such as deep Q-learning or deterministic actor-critic algorithms, can in fact work reasonably well~\citep{agarwal2019optimistic}.

However, as we will discuss in Section~\ref{sec:challenges_adp}, such methods suffer from a number of issues when all online collection is halted, and only offline data is used. These issues essentially amount to distributional shift, as discussed in Section~\ref{sec:basic_challenges}. Solutions to this issue can be broadly grouped into two categories: \textbf{policy constraint} methods, discussed in Section~\ref{sec:policy_constraints}, which constrain the learned policy $\policy$ to lie close to the behavior policy $\behavior$, thus mitigating distributional shift, and \textbf{uncertainty-based} methods, discussed in Section~\ref{sec:adp_uncertainty}, which attempt to estimate the epistemic uncertainty of Q-values, and then utilize this uncertainty to detect distributional shift. We will discuss both classes of distributional shift corrections, and then conclude with perspectives on remaining challenges and open problems in Section~\ref{sec:adp_challenges}.

\subsection{Off-Policy Value Function Estimation with Linear Value Functions}
\label{sec:basic_offline_adp}

We first discuss standard offline reinforcement learning methods based on value function and policy estimation with linear function approximators, which do not by themselves provide any mitigate for distributional shift, but can work well when good linear features are available. While modern deep reinforcement learning methods generally eschew linear features in favor of non-linear neural network function approximators, linear methods constitute an important class of offline reinforcement learning algorithms in the literature~\citep{lange2012batch,lagoudakis2003least}. We begin with algorithms that use a linear function to approximate the Q-function, such that $Q_\phi \approx \qfeat(s, a)^T \phi$, where $\qfeat(s, a) \in \mathbb{R}^d$ is a feature vector associated with a state-action pair. 
This parametric Q-function can then be estimated for a given policy $\policy(\ba|\bs)$ using data from a different behavior policy $\behavior(\ba|\bs)$, with state visitation frequency $\freq^{\behavior}(\bs)$. As discussed in Section~\ref{sec:rl_prelims}, the Q-function $Q^\policy$ for a given policy $\policy(\ba|\bs)$ must satisfy the Bellman equation, given in full in Equation~(\ref{eq:qeq}), and written in Bellman operator notation as $\vec{Q^\policy} = \bellman^\policy \vec{Q^\policy}$.

When linear function approximation is used to represent the Q-function, the Q-function for a policy can be represented as the solution of a linear system, and hence can be computed via least squares minimization, since the Bellman operator $\bellman^\policy$ is linear. This provides a convenient way to compute $Q^\policy$ directly in closed form, as compared to using gradient descent in Algorithm~\ref{alg:actorcritic}. The resulting Q-value estimates can then be used to improve the policy.
We start with a discussion of different ways of solving the linear system for computing $Q^\policy$. To recap, the Q-function is a linear function on a feature vector $\qfeat(\bs,\ba)$, which we can express in tabular form as $\Qfeat \in \mathbb{R}^{|S||A| \times d}$, such that $\vec{Q}_\phi = \Qfeat \phi$. Multiple procedures that can be used to obtain a closed form expression for the optimal $\phi$ for a given policy $\policy$ and, as discussed in by \citet{sutton2009fastgtd} and \citet{lagoudakis2003least}, these procedures may each yield different solutions under function approximation. We first summarize two methods for selecting $\phi$ to approximate $Q^\policy$ for a given policy $\policy$, and then discuss how to utilize these methods in a complete reinforcement learning method.

\paragraph{Bellman residual minimization.} The first approach selects $\phi$ such that the resulting linear Q-function satisfies the Bellman equation as closely as possible in terms of squared error, which can be obtained via the least squares solution. To derive the corresponding least squares problem, we first write the Bellman equation in terms of the Bellman operator, and expand it using the vectorized expression for the reward function, $\vec{R}$, and a linear operator representing the dynamics and policy backup, which we denote as $P^\policy$, such that $(P^\policy \vec{Q}) (\bs,\ba) = \E_{\bs' \sim \transitions(\bs'|\bs,\ba), \ba'\sim\policy(\ba'|\bs')}[Q(\bs',\ba')]$. The corresponding expression of the Bellman equation is given by
\begin{equation*}
\Qfeat \phi \approx \bellman^\policy \Qfeat \phi = \vec{R} + \discount P^\policy \Qfeat \phi \implies
\left(\Qfeat - \discount P^\policy \Qfeat \right) \phi \approx \vec{R}.
\end{equation*}
Writing out the least squares solution using the normal equations, we obtain
\begin{equation*}
\phi = \left( (\Qfeat - \discount P^\policy \Qfeat)^T (\Qfeat - \discount P^\policy \Qfeat) \right)^{-1} (\Qfeat - \discount P^\policy \Qfeat)^T \vec{R}.
\end{equation*}
This expression minimizes the $\ell_2$ norm of the Bellman residual (the squared difference between the left-hand side and right-hand side of the Bellman equation), and is referred to as the Bellman residual minimizing solution.

\paragraph{Least-squares fixed point approximation.} An alternate approach is use projected fixed-point iteration, rather than direct minimization of the Bellman error, which gives rise to the least-squares fixed point approximation. In this approach, instead of minimizing the squared difference between the left-hand side and right-hand side of the Bellman equation, we instead attempt to iterate the Bellman operator to convergence. In the tabular case, as discussed in Section~\ref{sec:rl_prelims}, we know that iterating $\vec{Q}_{k+1} \leftarrow \bellman^\policy \vec{Q}_k$ converges, as $k \rightarrow \infty$, to $\vec{Q}^\policy$. In the case where we use function approximation to represent $\vec{Q}_k$, we cannot set $\vec{Q}_{k+1}$ to $\bellman^\policy \vec{Q}_k$ precisely, because there may not be any value of the weights $\phi$ that represent $\bellman^\policy \vec{Q}_k$ exactly. We therefore must employ a \emph{projected} fixed point iteration method, where each iterate $\bellman^\policy \vec{Q}_k = \bellman^\policy\Qfeat\phi_k$ is projected onto the span of $\Qfeat$ to obtain $\phi_{k+1}$. We can express this projection via a projection operator, $\proj$, such that the projected Bellman iteration is given by $\vec{Q}_{k+1} = \proj\bellman^\policy \vec{Q}_k$. We can obtain a solution for this operator by solving the normal equation, and observe that $\proj = \Qfeat (\Qfeat^T \Qfeat)^{-1} \Qfeat^T$. We can expand out the projected Bellman iteration expression as:
\begin{align}
\vec{Q}_{k+1} &= \Qfeat (\Qfeat^T \Qfeat)^{-1} \Qfeat^T(\vec{R} + \discount P^\policy \vec{Q}_k) \nonumber \\
\Qfeat \phi_{k+1} &= \Qfeat (\Qfeat^T \Qfeat)^{-1} \Qfeat^T(\vec{R} + \discount P^\policy \Qfeat \phi_k) \nonumber \\
\phi_{k+1} &= (\Qfeat^T \Qfeat)^{-1} \Qfeat^T(\vec{R} + \discount P^\policy \Qfeat \phi_k). \label{eqn:w_pi_equation}
\end{align}
By repeatedly applying this recurrence, we can obtain the fixed point of the projected Bellman operator as $k \rightarrow \infty$~\citep{sutton2009fastgtd}.

Both methods have been used in the literature, and there is no clear consensus on which approach is preferred, though they yield different solutions in general when the true Q-function $Q^\policy$ is not in the span of $\Qfeat$ (i.e., cannot be represented by any parameter vector $\phi$)~\citep{lagoudakis2003least}. We might at first surmise that a good linear fitting method should find the Q-function $\Qfeat \phi$ that corresponds to a least-squares projection of the true $\vec{Q}^\policy$ onto the hyperplane defined by $\Qfeat$. Unfortunately, \emph{neither} the Bellman residual minimization method nor the least-squares fixed point method in general obtains this solution. The Bellman residual minimization does not in general produce a fixed point of \emph{either} the Bellman operator or the projected Bellman operator. However, the solution obtained via Bellman residual minimization may be closer to the true Q-function in terms of $\ell_2$ distance, since it is explicitly obtained by minimizing Bellman residual error. Least-squares fixed point iteration obtains a Q-function that is a fixed point of the projected Bellman operator, but may be arbitrarily suboptimal. However, least-squares fixed point iteration can learn solutions that lead, empirically, to better-performing policies \citep{sutton2009fastgtd,lagoudakis2003least}. In general, there are no theoretical arguments that justify the superiority of one approach over the other. In practice, least-squares fixed-point iteration can give rise to more effective policies as compared to the Bellman residual, while the Bellman residual minimization approach can be more stable and predictable~\citep{lagoudakis2003least}.

\paragraph{Least squares temporal difference Q-learning (LSTD-Q).} Now that we have covered different approaches to solve a linear system of equations to obtain an approximation to $Q^\policy$, we describe LSTD-Q, a method for estimating $Q^\policy$ from a static dataset, directly from samples. This method incrementally estimates the terms $\Qfeat^T (\Qfeat - \discount P^{\policy} \Qfeat)$ and $\Qfeat^T \vec{R}$, which appear in Equation~(\ref{eqn:w_pi_equation}), and then inverts this sampled estimate to then obtain $\phi$. We defer further discussion on LSTD-Q to \citet{lagoudakis2003least}, which also describes several computationally efficient implementations of the LSTD-Q algorithm. Note that the LSTD-Q algorithm is not directly applicable to estimating $Q^*$, the optimal Q-function, since the optimal Bellman equation for $Q^*$ is not linear due to the maximization, and thus cannot be solved in closed form.

\paragraph{Least squares policy iteration (LSPI).} Finally, we discuss least-squares policy iteration (LSPI), a classical offline reinforcement learning algorithm that performs approximate policy iteration (see discussion in Section~\ref{sec:rl_prelims}) using a linear approximation to the Q-function. LSPI uses LSTD-Q as an intermediate sub-routine to perform approximate policy evaluation, thereby obtaining an estimate for $Q^\policy$, denoted $Q_\phi$. Then, the algorithm sets the next policy iterate to be equal to the greedy policy corresponding to the approximate $Q_\phi$, such that $\policy_{k+1}(\ba|\bs) = \delta(\ba = \arg\max_\ba Q_\phi(\bs,\ba))$.
An important and useful characteristic of LSPI is that it does not require a separate approximate representation for the policy when the actions are discrete, and hence removes any source of error that arises due to function approximation in the actor in actor-critic methods. However, when the action space is continuous, a policy gradient method similar to the generic actor-critic algorithm in Algorithm~\ref{alg:actorcritic} can be used to optimize a parametric policy.

\subsection{Distributional Shift in Offline Reinforcement Learning via Dynamic Programming}
\label{sec:challenges_adp}

Both the linear and non-linear dynamic programming methods discussed so far, in Section~\ref{sec:rl_prelims} and Section~\ref{sec:basic_offline_adp} above, can \emph{in principle} learn from offline data, collected according to a different (unknown) behavioral policy $\behavior(\ba|\bs)$, with state visitation frequency $\freq^{\behavior}(\bs)$. However, in practice, these procedures can result in very poor performance, due to the distributional shift issues alluded to in Section~\ref{sec:basic_challenges}.

Distributional shift affects offline reinforcement learning via dynamic programming both at test time and at training time. At test time, the state visitation frequency $\freq^{\policy}(\bs)$ induced by a policy trained with offline RL will differ systematically from the state visitation frequency of the training data, $\freq^{\behavior}(\bs)$. This means that, as in the case of policy gradients, a policy trained via an actor-critic method may produce unexpected and erroneous actions in out-of-distribution states $\bs \sim \freq^{\behavior}(\bs)$, as can the implicit greedy policy induced by a Q-function in a Q-learning method. One way to mitigate this distributional shift is to limit how much the learned policy can diverge from the behavior policy. For example, by constraining $\policy(\ba|\bs)$ such that $\kl(\policy(\ba|\bs) \| \behavior(\ba|\bs)) \leq \epsilon$, we can bound $\kl(\freq^{\policy}(\bs)\|\freq^{\behavior}(\bs))$ by $\delta$, which is $O(\epsilon/(1-\gamma)^2)$~\citep{schulman2015trust}. This bound is very loose in practice, but nonetheless suggests that the effects of state distribution shift can potentially be mitigated by bounding how much the learned policy can deviate from the behavior policy that collected the offline training data. This may come at a substantial cost in final performance however, as the behavior policy -- and any policy that is close to it -- may be much worse than the best policy that can be learned from the offline data.

It should be noted that, for the algorithms discussed previously, state distribution shift affects test-time performance, but has no effect on training, since neither the policy nor the Q-function is ever evaluated at any state that was not sampled from $\freq^{\behavior}(\bs)$. However, the training process \emph{is} affected by \emph{action} distribution shift, because the target values for the Bellman backups in Equation~(\ref{eq:qeq}) depend on $\ba_{t+1} \sim \policy(\ba_{t+1} | \bs_{t+1})$. While this source of distribution shift may at first seem insignificant, it in fact presents one of the largest obstacles for practical application of approximate dynamic programming methods to offline reinforcement learning. Since computing the target values in Equation~(\ref{eq:qeq}) requires evaluating $Q^\policy(\bs_{t+1},\ba_{t+1})$, where $\ba_{t+1} \sim \policy(\ba_{t+1} | \bs_{t+1})$, the accuracy of the Q-function regression targets depends on the estimate of the Q-value for actions that are outside of the distribution of actions that the Q-function was ever trained on. When $\policy(\ba | \bs)$ differs substantially from $\behavior(\ba|\bs)$, this discrepancy can result in highly erroneous target Q-values. This issue is further exacerbated by the fact that $\policy(\ba | \bs)$ is explicitly optimized to maximize $\E_{\ba\sim\policy(\ba|\bs)}[Q^\policy(\bs,\ba)]$. This means that, if the policy can produce out-of-distribution actions for which the learned Q-function erroneously produces excessively large values, it will learn to do so. This is true whether the policy is represented explicitly, as in actor-critic algorithms, or implicitly, as the greedy policy $\policy(\ba|\bs) = \delta(\ba = \arg\max_{\ba'} Q^\policy(\bs,\ba'))$. In standard online reinforcement learning, such issues are corrected naturally when the policy acts in the environment, attempting the transitions it (erroneously) believes to be good, and observing that in fact they are not. However, in the offline setting, the policy cannot correct such over-optimistic Q-values, and these errors accumulate over each iteration of training, resulting in arbitrarily poor final results.

\begin{wrapfigure}{r}{0.5\textwidth}
\begin{center}
\vspace{-0.1in}
    \includegraphics[width=0.48\linewidth]{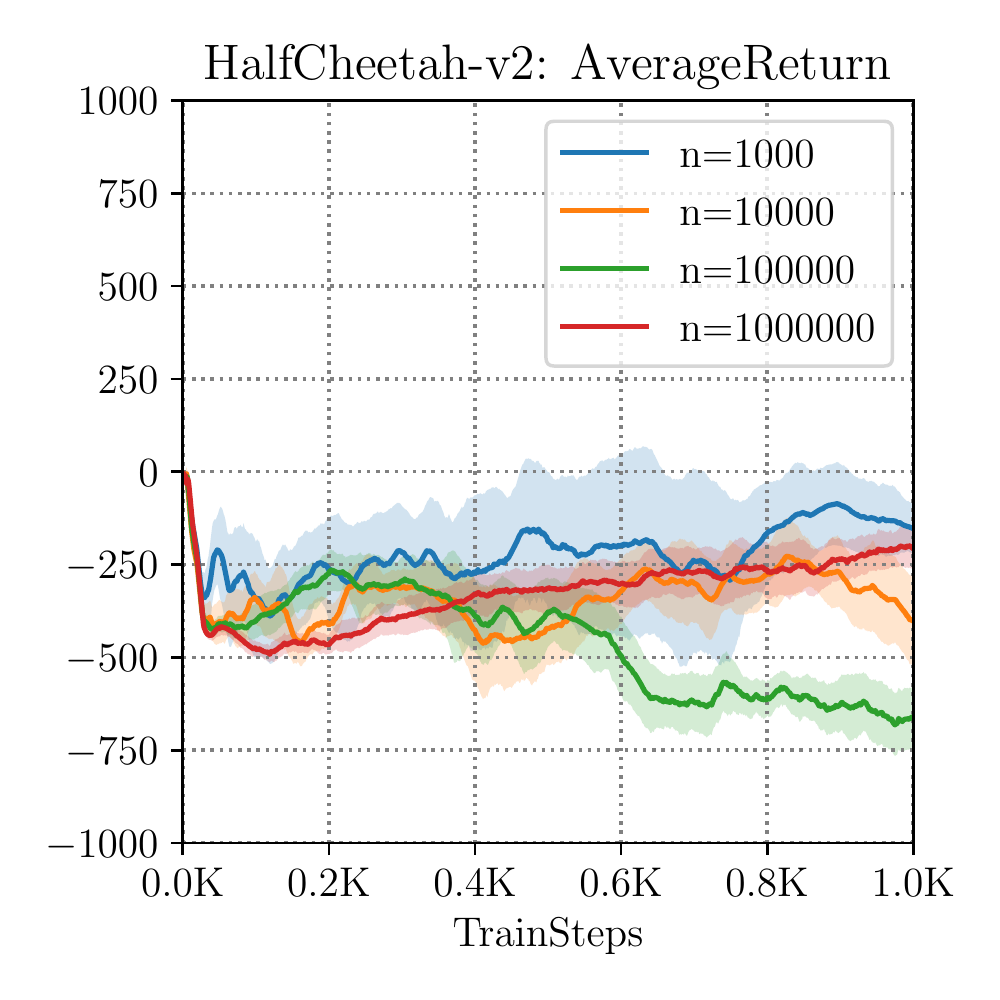}
    ~
    \includegraphics[width=0.48\linewidth]{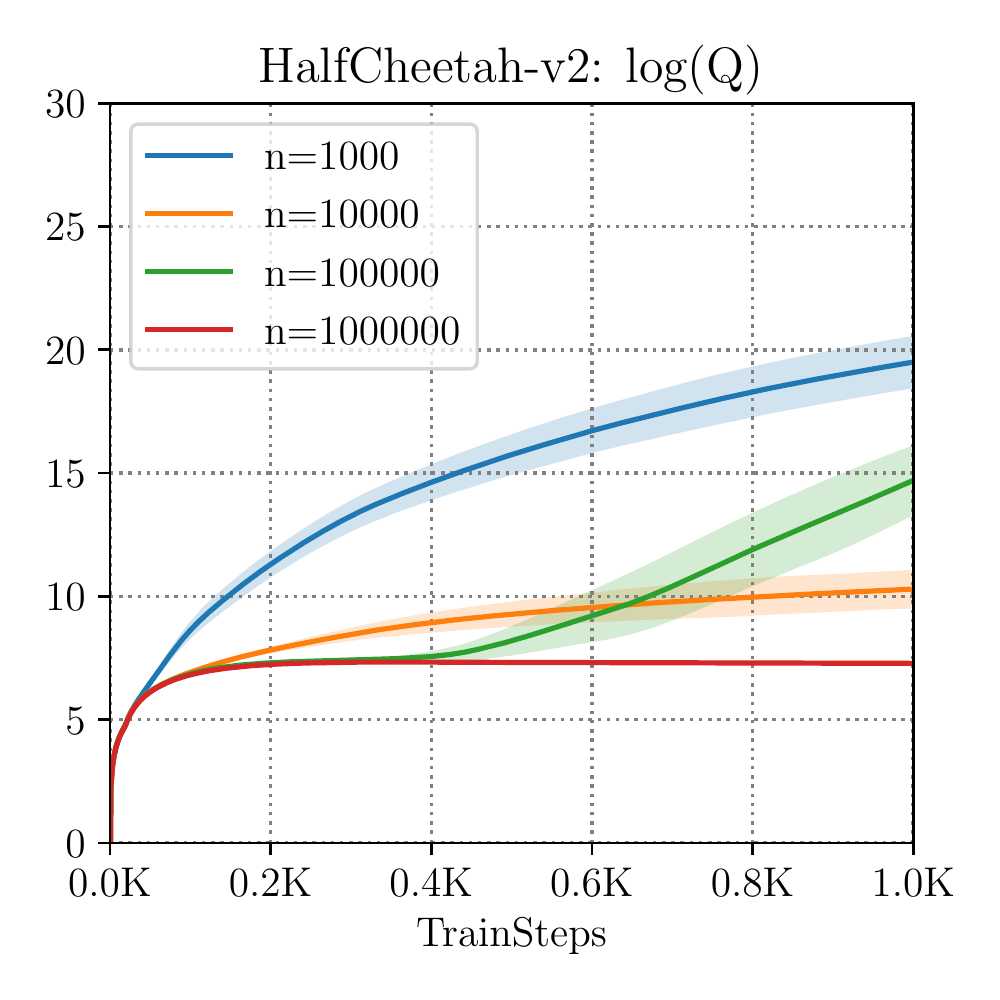}
  \end{center}
 \vspace{-10pt}
  \caption{Performance of SAC~\citep{sac}, an actor-critic method, on the HalfCheetah-v2 task in the offline setting, showing return as a function of gradient steps (left) and average learned Q-values on a log scale (right), for different numbers of training points ($n$). Note that an increase the number of samples does not generally prevent the ``unlearning effect,'' indicating that it is distinct from overfitting. Figure from \citet{kumar2019stabilizing}.} 
 \vspace{-15pt}
 \label{fig:bear_overfitting}
\end{wrapfigure}
This problem manifests itself in practice as an ``unlearning'' effect when running offline RL via dynamic programming. The experiments in Figure~\ref{fig:bear_overfitting}, from \citet{kumar2019stabilizing}, show this unlearning effect. The horizontal axis corresponds to the number of gradient updates to the Q-function, and the vertical axis shows the actual return obtained by running the greedy policy for the learned Q-function. The learning curve resembles what we might expect in the case of overfitting: the return first improves, and then sharply falls as training progresses. However, this ``overfitting'' effect remains even as we increase the size of the dataset, suggesting that the phenomenon is distinct from classic statistical overfitting. As the Q-function is trained longer and longer, the target values become more and more erroneous, and the entire Q-function degrades.

Thus, to effectively implement offline reinforcement learning via dynamic programming, it is crucial to address this out-of-distribution action problem. Previous works have observed several variants of this problem. \citet{fujimoto2018off} noted that restricting Q-value evaluation only to actions in the dataset avoids erroneous Q-values due to extrapolation error. \citet{kumar2019stabilizing} described the out-of-distribution action problem in terms of distributional shift, which suggests a less restrictive solution based on limiting distributional shift, rather than simply constraining to previously seen actions. A number of more recent works also observe that a variety of constraints can be used to mitigate action distribution shift~\citep{wu2019behavior}. We provide a unified view of such ``policy constraint'' methods in the following section.

\vspace{-5pt}
\subsection{Policy Constraints for Off-Policy Evaluation and Improvement}
\label{sec:policy_constraints}

The basic idea behind \textbf{policy constraint} methods for offline reinforcement learning via dynamic programming is to ensure, explicitly or implicitly, that regardless of the target values in Equation~(\ref{eq:qeq}), the distribution over actions under which we compute the target value, $\policy(\ba'|\bs')$, is ``close'' to the behavior distribution $\behavior(\ba'|\bs')$. This ensures that the Q-function is never queried on out-of-distribution actions, which may introduce errors into the computation of the Bellman operator. If all states and actions fed into the Q-function for target value calculations are always in-distribution with respect to the Q-function training set, errors in the Q-function should not accumulate, and standard generalization results from supervised learning should apply. Since the Q-function is evaluated on the same states as the ones on which it is trained, only the action inputs can be out of distribution, when we compute $\E_{\ba'\sim\policy(\ba'|\bs')}[Q(\bs',\ba')]$, and therefore it is sufficient to keep $\policy(\ba'|\bs')$ close to $\behavior(\ba'|\bs')$. Of course, in practice, the distributions do need to deviate in order for the learned policy to improve over the behavior policy, but by keeping this deviation small, errors due to out-of-distribution action inputs can be kept under control. The different methods in this family differ in terms of the probability metric used to define ``closeness'' and how this constraint is actually introduced and enforced. We can broadly categorize these methods along these two axes. We will discuss \textbf{explicit $f$-divergence constraints}, which directly add a constraint to the actor update to keep the policy $\policy$ close to $\behavior$ in terms of an $f$-divergence (typically the KL-divergence), \textbf{implicit $f$-divergence constraints}, which utilize an actor update that, by construction, keeps $\policy$ close to $\behavior$, and \textbf{integral probability metric (IPM) constraints}, which can express constraints with more favorable theoretical and empirical properties for offline RL. Furthermore, the constraints can be enforced either as direct \textbf{policy constraints} on the actor update, or via a \textbf{policy penalty} added to the reward function or target Q-value.

Formally, we can express the family of policy iteration methods with policy constraints as a fixed point iteration involving iterative (partial) optimization of the following objectives:
\begin{align*}
\hat{Q}^\policy_{k+1} &\leftarrow \arg\min_Q \E_{(\bs,\ba,\bs') \sim \data} \left[
\left(
Q(\bs,\ba) - \left(
r(\bs,\ba) + \discount \E_{\ba'\sim \policy_k(\ba'|\bs')}[\hat{Q}^\policy_k(\bs',\ba')]
\right)
\right)^2
\right] \\
\policy_{k+1} &\leftarrow \arg\max_\policy \E_{\bs \sim \data}\left[
\E_{\ba \sim \policy(\ba|\bs)}[\hat{Q}^\policy_{k+1}(\bs,\ba)]
\right] \text{ s.t. } D(\policy, \behavior) \leq \epsilon.
\end{align*}
When the $\min$ and $\max$ optimizations are not performed to convergence, but instead for a limited number of gradient steps, we recover the general actor-critic method in Algorithm~\ref{alg:actorcritic}, with the exception of the constraint $D(\policy, \behavior) \leq \epsilon$ on the policy update.
A number of prior methods instantiate this approach with different choices of $D$~\citep{kumar2019stabilizing,fujimoto2018off,siegel2020keep}. We will refer to this class of algorithms as \textbf{policy constraint} methods.

In \textbf{policy penalty} methods, the actor-critic algorithm is modified to incorporate the constraint into the Q-values, which forces the policy to not only avoid deviating from $\behavior(\ba|\bs)$ at each state, but to also avoid actions that may lead to higher deviation from $\behavior(\ba|\bs)$ at future time steps. This can be accomplished by adding a penalty term $\alpha D(\policy(\cdot | \bs), \behavior(\cdot | \bs))$ to the reward function $\reward(\bs,\ba)$ leading to the modified reward function $\bar{\reward}(\bs,\ba) = \reward(\bs,\ba) - \alpha D(\policy(\cdot | \bs), \behavior(\cdot | \bs))$. The most well-known formulation of policy penalty methods stems from the linearly solvable MDP framework~\citep{todorov_lmdp}, or equivalently the control as inference framework~\citep{levine2018reinforcement}, where $D$ is chosen to be the KL-divergence. An equivalent formulation adds the penalty term $\alpha D(\policy(\cdot | \bs), \behavior(\cdot | \bs))$ directly to the target Q-values and the actor objective~\citep{wu2019behavior,jaques2019way}, resulting in the following algorithm:
\begin{align*}
\hat{Q}^\policy_{k+1} &\leftarrow \arg\min_Q\\ &\E_{(\bs,\ba,\bs') \sim \data}\! \left[
\left(
Q(\bs,\ba)\! -\! \left(
r(\bs,\ba)\! +\! \discount \E_{\ba'\sim \policy_k(\ba'|\bs')}[\hat{Q}^\policy_k(\bs',\ba')]\! -\! \alpha \gamma \! D(\policy_k(\cdot|\bs'), \behavior(\cdot|\bs'))
\right)
\right)^2
\right] \\
\policy_{k+1} &\leftarrow \arg\max_\policy \E_{\bs \sim \data}\left[
\E_{\ba \sim \policy(\ba|\bs)}[\hat{Q}^\policy_{k+1}(\bs,\ba)] -  \alpha D(\policy(\cdot|\bs), \behavior(\cdot|\bs)) 
\right].
\end{align*}
While the basic recipe for policy constraint and policy penalty methods is similar, the particular choice of how the constraints are defined and how they are enforced can make a significant difference in practice. We will discuss these choices next, as well as their tradeoffs.

\paragraph{Explicit $f$-divergence constraints.} For any convex function $f$, the corresponding $f$-divergence is defined as:
\begin{equation}
\label{eqn:primal_f}
    D_f(\policy(\cdot|\bs), \behavior(\cdot|\bs)) = \E_{\ba \sim \policy(\cdot|\bs)} \left[ f \left( \frac{\policy(\ba|\bs)}{\behavior(\ba|\bs)} \right) \right].
\end{equation}
KL-divergence, $\chi^2$-divergence, and total-variation distance are some commonly used members of the $f$-divergence family, corresponding to different choices of function $f$~\citep{nowozin2016f}. A variational form for the $f$-divergence can also be written as
\begin{equation}
\label{eqn:variational}
    D_f(\policy(\cdot|\bs), \behavior(\cdot|\bs)) = \max_{x: S \times A \rightarrow \mathbb{R}} \E_{\ba \sim \policy(\cdot|\bs)}\left[ x(\bs, \ba) \right] - \E_{\ba' \sim \behavior(\cdot|\bs)}\left[ f^*(x(\bs, \ba')) \right],
\end{equation}
where $f^*$ is the convex conjugate for the convex function $f$. Both the primal form (Equation~\ref{eqn:primal_f}) and the dual variational form (Equation~\ref{eqn:variational}) of the $f$-divergence has been used to specify policy constraints. In the dual form, an additional neural network is used to represent the function $x$.
The standard form of ``passive dynamics'' in linearly solvable MDPs~\citep{todorov_lmdp} or the action prior in control as inference~\citep{levine2018reinforcement} corresponds to the KL-divergence (primal form), which has also been used in recent work that adapts such a KL-divergence penalty for offline reinforcement learning~\citep{jaques2019way,wu2019behavior}. The KL-divergence, given by $D_{\mathrm{KL}}(\policy, \behavior) = \E_{\ba \sim \policy(\cdot|\bs)}[\log \policy(\ba|\bs) - \log \behavior(\ba|\bs)]$, can be computed by sampling action samples $\ba \sim \policy(\cdot|\bs)$, and then computing sample-wise estimates of the likelihoods inside the expectation. It is commonly used together with ``policy penalty'' algorithms, by simply adding an estimate of $-\alpha \log \behavior(\ba|\bs)$ to the reward function, and employing an entropy regularized reinforcement learning algorithm, which analytically adds $\ent(\policy(\cdot|\bs))$ to the actor objective~\citep{levine2018reinforcement}. One significant disadvantage of this approach is that it requires explicit estimation of the behavior policy (e.g., via behavioral cloning) to fit $\log \behavior(\ba|\bs)$.

Additionally, the sub-family of asymmetrically-relaxed $f$-divergences can be used for the policy constraint. For any chosen convex function $f$, these divergences modify the expression for $D_f$ to integrate over only those $\ba$ such that the density ration $\policy(\cdot|\bs)/\behavior(\ba|\bs)$ is larger than a pre-defined threshold, thereby not penalizing small density ratios. \citet{wu2019behavior} briefly discuss this divergence sub-family, and we refer readers to \citet{wu2019domain} for a detailed description.

\paragraph{Implicit $f$-divergence constraints.} The KL-divergence constraint can also be enforced implicitly, as in the case of AWR~\citep{peng2019awr}, AWAC~\citep{nair2020accelerating}, and ABM~\citep{siegel2020keep}. These methods first solve for the optimal next policy iterate under the KL-divergence constraint, indicated as $\bar{\policy}_{k+1}$, non-parameterically, and then project it onto the policy function class via supervised regression, implementing the following procedure:
\begin{align*}
\bar{\policy}_{k+1}(\ba|\bs) &\leftarrow \frac{1}{Z}\behavior(\ba|\bs)\exp\left( \frac{1}{\alpha} Q^\policy_k(\bs,\ba) \right) \\
\policy_{k+1} &\leftarrow \arg\min_\policy D_{\mathrm{KL}}(\bar{\policy}_{k+1}, \policy)
\end{align*}
In practice, the first step can be implemented by weighting samples from $\behavior(\ba|\bs)$ (i.e., the data in the buffer $\data$) by importance weights proportional to $\exp\left( \frac{1}{\alpha} Q^\policy_k(\bs,\ba) \right)$, and the second step can be implemented via a weighted supervised learning procedure employing these weights. In this way, importance sampling effectively implements a KL-divergence constraint on the policy, with $\alpha$ corresponding to the Lagrange multiplier. The Q-function $Q^\policy_k$ corresponding to the previous policy $\policy_k$ can be estimated with any Q-value or advantage estimator. We refer the reader to related work for a full derivation of this procedure~\citep{peng2019awr,nair2020accelerating}.

\paragraph{Integral probability metrics (IPMs).} Another choice of the policy constraint $D$ is an integral probability metric, which is a divergence measure with the following functional form dependent on a function class $\Phi$:
\begin{equation}
    D_\Phi(\policy(\cdot|\bs), \behavior(\cdot|\bs)) = \sup_{\phi \in \Phi, \phi: S \times A \rightarrow \mathbb{R}} \left\vert \E_{\ba \sim \policy(\cdot|\bs)}[\phi(\bs, \ba)] - \E_{\ba' \sim \behavior(\cdot|\bs)}[\phi(\bs, \ba')] \right\vert.
\end{equation}
Different choices for the function class $\Phi$ give rise to different divergences. For example, when $\Phi$ consists of all functions with a unit Hilbert norm in a reproducing kernel Hilbert space (RKHS), $D_\Phi$ represents the maximum mean discrepancy (MMD) distance, which can alternatively be computed directly from samples as following:
\begin{multline*}
    \text{MMD}^2(\policy(\cdot|\bs), \behavior(\cdot|\bs)) = \E_{\ba \sim \policy(\cdot|\bs), \ba' \sim \policy(\cdot|\bs)}\left[ k(\ba, \ba') \right] -\\ 2 \E_{\ba \sim \policy(\cdot|\bs), \ba' \sim \behavior(\cdot|\bs)}\left[ k(\ba, \ba') \right] + \E_{\ba \sim \behavior(\cdot|\bs), \ba' \sim \behavior(\cdot|\bs)}\left[ k(\ba, \ba') \right],
\end{multline*}
where $k(\cdot, \cdot)$ represents any radial basis kernel, such as the Gaussian or Laplacian kernel. As another example, when $\Phi$ consists of all functions $\phi$ with a unit Lipschitz constant, then $D_\Phi$ is equivalent to the first order Wasserstein ($W_1$) or Earth-mover's distance:
\begin{equation}
    W_1(\policy(\cdot|\bs), \behavior(\cdot|\bs)) = \sup_{f, ||f||_L \leq 1} \left\vert \E_{\ba \sim \policy(\cdot|\bs)}[f(\ba)] - \E_{\ba \sim \behavior(\cdot|\bs}[f(\ba)] \right\vert.
\end{equation}
These metrics are appealing because they can often be estimated with non-parametric estimators. We refer the readers to \citet{sriperumbudur2009integral} for a detailed discussion on IPMs. BEAR~\citep{kumar2019stabilizing} uses the MMD distance to represent the policy constraint, while \citet{wu2019behavior} evaluates an instantiation of the first order Wasserstein distance.

\paragraph{Tradeoffs between different constraint types.}
The KL-divergence constraint, as well as other $f$-divergences, represent a particularly convenient class of constraints, since they readily lend themselves to be used in a policy penalty algorithm by simply augmenting the reward function according to $\bar{\reward}(\bs,\ba) = \reward(\bs,\ba) - \alpha f(\policy(\ba|\bs)/\behavior(\ba|\bs))$, with the important special case of the KL-divergence corresponding to $\bar{\reward}(\bs,\ba) = \reward(\bs,\ba) + \alpha\log \behavior(\ba|\bs) - \alpha \log \policy(\ba|\bs)$, with the last term subsumed inside the entropy regularizer $\ent(\policy(\cdot|\bs))$ when using a maximum entropy reinforcement learning algorithm~\citep{levine2018reinforcement}. However, KL-divergences and other $f$-divergences are not necessarily ideal for offline reinforcement learning. Consider a setting where the behavior policy is uniformly random. In this case, offline reinforcement learning should, in principle, be very effective. In fact, even standard actor-critic algorithms \emph{without} any policy constraints can perform very well in this setting, since when all actions have equal probability, there are no out-of-distribution actions~\citep{kumar2019stabilizing}. However, a KL-divergence constraint in this setting would restrict the learned policy $\policy(\ba|\bs)$ from being too concentrated, requiring the constrained algorithm to produce a highly stochastic (and therefore highly suboptimal) policy. Intuitively, an effective policy constraint should prevent the learned policy $\policy(\ba|\bs)$ from going \emph{outside} the set of actions that have a high probability in the data, but would not prevent it from concentrating around a \emph{subset} of high-probability actions. This suggests that a KL-divergence constraint is not in general the ideal choice.

\begin{wrapfigure}{r}{0.6\textwidth} 
    \centering
    \vspace{-5pt}
    \includegraphics[width=1.0\linewidth]{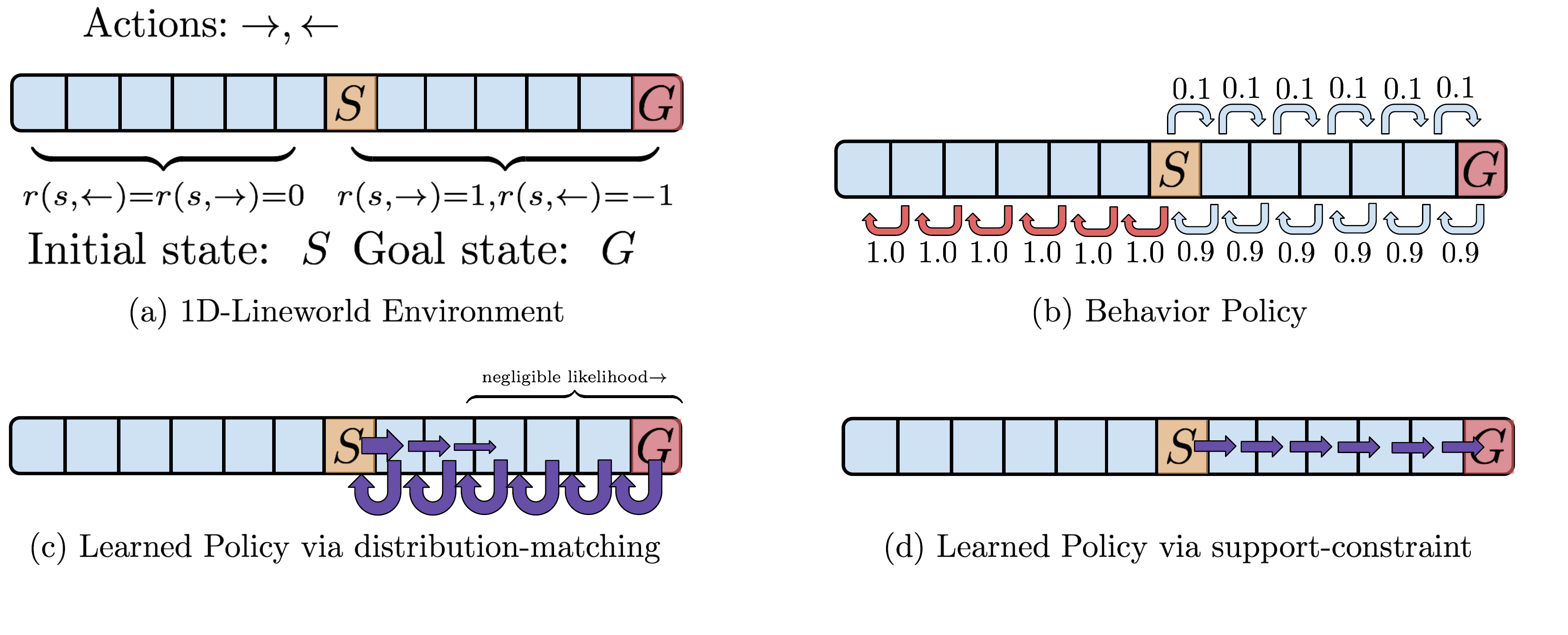}
    \vspace{-15pt}
    \caption{A comparison of support and distribution constraints on a simple 1D lineworld from \citet{kumar_blog}. The task requires moving to the goal location (marked as 'G') starting from 'S'. The behavior policy strongly prefers the left action at each state (b), such that distribution constraint is unable to find the optimal policy (c), whereas support-constraint can successfully obtain the optimal policy (d). We refer to \citet{kumar_blog} for further discussion. \label{fig:support_vs_distribution}}
    \vspace{-10pt}
\end{wrapfigure}
In contrast, as argued by \citet{kumar2019stabilizing} and \citet{laroche2017safe}, restricting the \textit{support} of the learned policy $\policy(\ba|\bs)$ to the support of the behavior distribution $\behavior(\ba|\bs)$ may be sufficient to theoretically and empirically obtain an effective offline RL method. In the previous example, if only the support of the learned policy is constrained to lie in the support of the behavior policy, the learned policy can be deterministic and optimal. However, when the behavior policy does not have full support, a support constraint will still prevent out-of-distribution actions. \citet{kumar_blog} formalize this intuition and present a simple example of a 1D-lineworld environment, which we reproduce in Figure~\ref{fig:support_vs_distribution} where constraining distributions can lead to highly suboptimal behavior, that strongly prefers moving leftwards, thus reaching the goal location with only very low likelihood over the course of multiple intermediate steps of decision making, while support constraints still allow for learning the optimal policy, since they are agnostic to the probability density function in this setting.

Which divergence metrics can be used to constrain policy supports? In a finite sample setting, the family of $f$-divergences measures the difference in the \textit{probability densities}, which makes it unsuitable for support matching. In an MDP with a discrete action-space, a simple choice for constraining the support is a metric that penalizes the total amount of probability mass on out-of-distribution actions under the $\policy$ distribution, as shown below:
\begin{equation}
    D_{\text{support}, \varepsilon}(\policy(\cdot|s), \behavior(\cdot|s)) = \sum_{\ba \in A, \behavior(\ba|\bs) \leq \varepsilon} \policy(\ba|\bs).
\end{equation}
This metric has been used in the context of off-policy bandits~\citep{sachdeva2020offpolicy}, but not in the context of offline reinforcement learning.
When the underlying MDP has a continuous action space and exact support cannot be estimated, \citet{kumar2019stabilizing} show that a finite sample estimate of the MMD distance can be used to approximately constrain supports of the learned policy. Similar to $f$-divergences, the MMD distance still converges to a divergence estimate between the distribution functions of its arguments. However, \citet{kumar2019stabilizing} show experimentally (Figure 7) that, in the finite sample setting, MMD resembles a support constraining metric. The intuition is that the MMD distance does not depend on the specific densities of the behavior distribution or the policy, and can be computed via a kernel-based distance on samples from each distribution, thus making it just sufficient enough for constraining supports when finite samples are used. Some variants of the asymmetric $f$-divergence that asymmetrically penalize the density ratios $\policy(\cdot|\bs)/\behavior(\cdot|\bs)$ can also be used to approximately constrain supports~\citep{wu2019domain,wu2019behavior}.

\subsection{Offline Approximate Dynamic Programming with Uncertainty Estimation}
\label{sec:adp_uncertainty}

As discussed in Section~\ref{sec:policy_constraints}, policy constraints can prevent out-of-distribution action queries to the Q-function when computing the target values. Aside from directly constraining the policy, we can also mitigate the effect of out-of-distribution actions by making the Q-function resilient to such queries, via effective uncertainty estimation. The intuition behind these \textbf{uncertainty-based} methods is that, if we can estimate the \emph{epistemic} uncertainty of the Q-function, we expect this uncertainty to be substantially larger for out-of-distribution actions, and can therefore utilize these uncertainty estimates to produce conservative target values in such cases. In this section, we briefly review such uncertainty-aware formulations of offline approximate dynamic programming methods.

Formally, such methods require learning an uncertainty set or distribution over possible Q-functions from the dataset $\data$, which we can denote $\mathcal{P}_\data(Q^{\pi})$. This can utilize explicit modeling of confidence sets, such as by maintaining upper and lower confidence bounds~\citep{jaksch2010near}, or directly representing samples from the distribution over Q-functions, for example via bootstrap ensembles~\citep{osband2016deep}, or by parameterizing the distribution over Q-values using a known (e.g., Gaussian) distribution with learned parameters~\citep{o2018uncertainty}. If calibrated uncertainty sets can be learned, then we can improve the policy using a conservative estimate of the Q-function, which corresponds to the following policy improvement objective:
\begin{equation}
    \pi_{k+1} \leftarrow \arg\max_\policy \E_{\bs \sim \data}\underbrace{\left[
\E_{\ba \sim \policy(\ba|\bs)}\left[\E_{Q^\policy_{k+1} \sim \mathcal{P}_\data(Q^{\pi})} [Q^\policy_{k+1}(\bs,\ba)] -  \alpha \text{Unc}(\mathcal{P}_\data(Q^{\pi}))\right]\right]}_{\text{conservative estimate}}, \label{eq:uncertainty_conservative}
\end{equation}
where $\text{Unc}$ denotes a metric of uncertainty, such that subtracting it provides a conservative estimate of the \textit{actual} Q-function. The choice of the uncertainty metric $\text{Unc}$ depends on the particular choice of uncertainty estimation method, and we discuss this next.

When bootstrap ensembles\footnote{It is known in the deep learning literature that, although a true bootstrap ensemble requires resampling the dataset with replacement for every bootstrap, omitting this resampling step and simply using different random initialization for every neural network in the ensemble is sufficient to differentiate the models and provide reasonable uncertainty estimates~\citep{osband2016deep}. In fact, previous work has generally observed little benefit from employ proper resampling~\citep{osband2016deep}, which technically means that all of these methods simply employ regular (non-bootstrapped) ensembles, although the quality of their uncertainty estimates is empirically similar.} are used to represent the Q-function, as is common in prior work~\citep{osband2016deep,eysenbach2017leave,kumar2019stabilizing,agarwal2019optimistic}, typical choices of `$\text{Unc}$' include variance across Q-value predictions of an ensemble of Q-functions~\citep{kumar2019stabilizing}, or maximizing the Q-value with respect to the worst case or all convex combinations of the Q-value predictions of an ensemble~\citep{agarwal2019optimistic}. When a parametric distribution, such as a Gaussian, is used to represent the Q-function~\citep{o2018uncertainty}, a choice of $\text{Unc}$, previously used for exploration, is to use a standard deviation above the mean as an optimistic Q-value estimate for policy improvement. When translated to offline RL, an analogous estimate would be to use conservative Q-values, such as one standard deviation below the mean, for policy improvement.

\subsection{Conservative Q-Learning and Pessimistic Value Functions}
\label{sec:adp_conservative}

As an alternative to imposing constraints on the policy in an actor-critic framework, an effective approach to offline RL can also be developed by regularizing the value function or Q-function directly to avoid overestimation for out-of-distribution actions~\citep{kumar2020conservative}. This approach can be appealing for several reasons, such as being applicable to both actor-critic and Q-learning methods, even when a policy is not represented explicitly, and avoiding the need for explicit modeling of the behavior policy. The practical effect of such a method resembles the conservative estimate in Equation~(\ref{eq:uncertainty_conservative}), but without explicit uncertainty estimation. As discussed by \citet{kumar2020conservative}, one simple way to ensure a conservative Q-function is to modify the objective for fitting the Q-function parameters $\phi$ on Line 14 of the Q-learning method in Algorithm~\ref{alg:qlearning} or Line 14 of the actor-critic method in Algorithm~\ref{alg:actorcritic} to add an additional \emph{conservative penalty} term, yielding a modified objective
\[
\tilde{\en}(B,\phi) = \alpha \conpen(B,\phi) + \en(B,\phi),
\]
where different choices for $\conpen(B,\phi)$ lead to algorithms with different properties. A basic example is the penalty $\conpen_{\text{CQL}_0}(B,\phi) = \E_{\bs \sim B, \ba \sim \mu(\ba|\bs)}[Q_\phi(\bs,\ba)]$. Intuitively, this penalty minimizes the Q-values at all of the states in the buffer, for actions selected according to the distribution $\mu(\ba|\bs)$. If $\mu(\ba|\bs)$ is chosen \emph{adversarially}, for example by \emph{maximizing} the penalty $\conpen_{\text{CQL}_0}(B,\phi)$, the effect is that the conservative penalty will push down on high Q-values. Note that the standard Bellman error term $\en(B,\phi)$ will still enforce that the Q-values obey the Bellman backup for \emph{in-distribution} actions. Therefore, if the penalty weight $\alpha$ is chosen appropriately, the conservative penalty should mostly push down on Q-values for out-of-distribution actions for which the Q-values are (potentially erroneously) high, since in-distribution actions would be ``anchored'' by the Bellman error $\en(B,\phi)$. Indeed, \citet{kumar2020conservative} show that, for an appropriately chosen value of $\alpha$, a Q-function trained with this conservative penalty will represent a \emph{lower bound} on the true Q-function $Q(\bs,\ba)$ for the current policy, both in theory and in practice. This results in a provably conservative Q-learning or actor-critic algorithm.
A simple and practical choice for $\mu(\ba,\bs)$ is to use a regularized adversarial objective, such that
\[
\mu = \arg\max_\mu \E_{\bs \sim \data} \left[ \E_{\ba \sim \mu(\ba|\bs)}[Q_\phi(\bs,\ba)] + \ent(\mu(\cdot | \bs))\right].
\]
This choice inherits the lower bound guarantee above, and is simple to compute. The solution to the above optimization is given by $\mu(\ba|\bs) \propto \exp(Q(\bs,\ba))$, and we can express $\conpen_{\text{CQL}_0}(B,\phi)$ under this choice of $\mu(\ba|\bs)$ in closed form as \mbox{$\conpen_{\text{CQL}_0}(B,\phi) = E_{\bs \sim B}[\log \sum_\ba \exp(Q_\phi(\bs,\ba))]$}. This expression has a simple intuitive interpretation: the log-sum-exp is dominated by the action with the largest Q-value, and hence this type of penalty tends to minimize whichever Q-value is largest at each state. When the action space is large or continuous, we can estimate this quantity by sampling and reweighting. For example, \citet{kumar2020conservative} propose sampling from the current actor (in an actor-critic algorithm) and computing importance weights.

While the approach described above has the appealing property of providing a learned Q-function $Q_\phi(\bs,\ba)$ that lower bounds the true Q-function, and therefore provably avoids overestimation, it tends to suffer from excessive \emph{underestimation} -- that is, it is too conservative. A simple modification, which we refer to as $\conpen_{\text{CQL}_1}(B,\phi)$, is to also add a value \emph{maximization} term to balance out the minimization term under $\mu(\ba|\bs)$, yielding the following expression:
\[
\conpen_{\text{CQL}_1}(B,\phi) = \E_{\bs \sim B, \ba \sim \mu(\ba|\bs)}[Q_\phi(\bs,\ba)] - \E_{(\bs,\ba) \sim B}[Q_\phi(\bs,\ba)].
\]
This conservative penalty minimizes Q-values under the adversarially chosen $\mu(\ba|\bs)$ distribution, and \emph{maximizes} the values for state-action tuples in the batch. Intuitively, this acts to ensure that high Q-values are only assigned to in-distribution actions. When $\mu(\ba|\bs)$ is equal to the behavior policy, the penalty is zero on average. While this penalty does \emph{not} produce a Q-function that is a pointwise lower bound on the true Q-function, it is a lower bound in expectation under the current policy, thereby still providing appealing conservatism guarantees, while substantially reducing underestimation in practice. As shown by \citet{kumar2020conservative}, this variant produces the best performance in practice.

\subsection{Challenges and Open Problems}
\label{sec:adp_challenges}

As we discussed in Section~\ref{sec:basic_offline_adp}, basic approximate dynamic programming algorithms can perform very poorly in the offline setting due to distributional shift, primarily due to the distributional shift of the actions due to the discrepancy between the behavior policy $\behavior(\ba|\bs)$ and the current learned policy $\policy(\ba|\bs)$, which is used in the target value calculation for the Bellman backup. We discussed how policy constraints and explicit uncertainty estimation can in principle mitigate this problem, but both approaches have a number of shortcomings.

While such explicit uncertainty-based methods are conceptually attractive, it is often very hard to obtain calibrated uncertainty estimates to evaluate $\mathcal{P}_\data(\hat{Q}^\pi)$ and $\text{Unc}$ in practice, especially with modern high-capacity function approximators, such as deep neural networks.
In practice, policy constraint and conservative value function methods so far seem to outperform pure uncertainty-based methods~\citep{fujimoto2018off}. This might at first appear surprising, since uncertainty estimation has been a very widely used tool in another subfield of reinforcement learning -- exploration. In exploration, acting \emph{optimistically} with respect to estimated uncertainty, or utilizing posterior sampling, has been shown to be effective in practice~\citep{osband2017posterior}. However, in the setting of offline reinforcement learning, where we instead must act \emph{conservatively} with respect to the uncertainty set, the demands on the fidelity of the uncertainty estimates are much higher. Exploration algorithms only require the uncertainty set to include good behavior -- in addition, potentially, to a lot of bad behavior. However, offline reinforcement learning requires the uncertainty set to directly capture the trustworthiness of the Q-function, which is a much higher bar. In practice, imperfect uncertainty sets can give rise to either overly conservative estimates, which impedes learning, or overly loose estimates, which results in exploitation of out-of-distribution actions. Of course, the relatively performance of policy constraint and uncertainty-based methods may change in the future, as the community develops better epistemic uncertainty estimation techniques or better algorithms to incorporate more suitable distribution metrics into offline RL.

Policy constraint methods do however suffer from a number of challenges. First, most of these methods fit an estimated model of the behavior policy $\behavior(\cdot|\bs)$ from the dataset and constrain the learned policy $\policy(\cdot|\bs)$ against this estimated behavior policy. This means that the performance of these algorithms is limited by the accuracy of estimation of the behavior policy, which may be hard in scenarios with highly multimodal behaviors, as is the case in practice with real-world problems. For example, if a unimodal Gaussian policy is used to model a highly multi-modal action distribution, averaging across different modes while fitting this behavior policy may actually be unable to prevent the learned policy, $\policy$, from choosing out-of-distribution actions. Methods that enforce the constraint \emph{implicitly}, using only samples and without explicit behavior policy estimation, are a promising way to alleviate this limitation~\citep{peng2019awr,nair2020accelerating}.

Even when the behavior policy can be estimated exactly, a number of these methods still suffer from the undesirable effects of function approximation. When neural networks are used to represent Q-functions, undesirable effects of function approximation may prevent the Q-function from learning meaningful values even when out-of-distribution actions are controlled for in the target values. For example, when the size of the dataset is limited, approximate dynamic programming algorithms tend to overfit on the small dataset, and this error is then accumulated via Bellman backups~\citep{fu2019diagnosing,kumar2020discor}. Moreover, if the dataset state-action distribution is narrow, neural network training may only provide brittle, non-generalizable solutions. Unlike online reinforcement learning, where accidental overestimation errors arising due to function approximation can be corrected via active data collection, these errors cumulatively build up and affect future iterates in an offline RL setting.

Methods that estimate a conservative or pessimistic value function~\citep{kumar2020conservative} present a somewhat different set of tradeoffs. While they avoid issues associated with estimating the behavior policy and can effectively avoid overestimation, they can instead suffer from underestimation and a form of overfitting: if the dataset is small, the conservative regularizer can assign values that are too low to actions that are undersampled in the dataset. Indeed, \emph{excessive} pessimism may be one of the bigger issues preventing better performance on current benchmarks, and an important open question is how to dynamically modulate the degree of conservatism to balance the risks of overestimation against the downside of avoiding any unfamiliar action.

A further issue that afflicts both constraint-based and conservative methods is that, while the Q-function is never evaluated on out-of-distribution states during training, the Q-function is still affected by the training set state distribution $\freq^{\behavior}(\bs)$, and this is not accounted for in current offline learning methods. For instance, when function approximation couples the Q-value at two distinct states with very different densities under $\freq^{\behavior}(\bs)$, dynamic programming with function approximation may give rise to incorrect solutions on the state that has a lower probability $\freq^{\behavior}(\bs)$. A variant of this issue was noted in the standard RL setting, referred to as an absence of ``corrective feedback'' by \citet{kumar2020discor} (see \citet{discor_blog} for a short summary), and such a problem may affect offline RL algorithms with function approximation more severely, since they have no control over the data distribution at all.

Another potential challenge for all of these offline approximate dynamic programming methods is that the degree of improvement beyond the behavior policy is restricted by error accumulation. Since the error in policy performance compounds with a factor that depends on $1 / (1 - \gamma)^2 \approx H^2$~\citep{farahmand2010error,kumar2019stabilizing,kidambi2020morel}, a small divergence from the behavior policy at each step can give rise to a policy that diverges away from the behavior distribution and performs very poorly. Besides impacting training, this issue can also lead to severe \emph{state} distribution shift at test time, which can lead the policy to perform very poorly. Therefore, policy constraints must be strong, so as to ensure that the overall error is small. This can restrict the amount of policy improvement that can be achieved. We might expect that directly constraining the state-action marginal distribution of the policy, such as the methods explored in recent work~\citep{nachum2019algaedice} might not suffer from the error accumulation issue, however, \citet{kidambi2020morel} proved a lower-bound result showing that quadratic scaling with respect to horizon is inevitable in the worst case for any offline RL method. 
Moreover, as previously discussed in Section~\ref{sec:pg_challenges}, representing and enforcing constraints on the state-action marginal distributions themselves requires Bellman backups, which can themselves suffer from out-of-distribution actions.
Besides the worst-case dependence on the horizon, an open question that still remains is is the development of constraints that can effectively trade off error accumulation and suboptimality of the learned policy in most ``average''-case MDP instances, and can be easily enforced and optimized in practice via standard optimization techniques without requiring additional function approximators to fit the behavior policy.

\section{Offline Model-Based Reinforcement Learning}
\label{sec:model_based}

The use of predictive models can be a powerful tool for enabling effective offline reinforcement learning. Since model-based reinforcement learning algorithms primarily rely on their ability to estimate $\transitions(\bs_{t+1}|\bs_t,\ba_t)$ via a parameterized model $\transitions_\psi(\bs_{t+1}|\bs_t,\ba_t)$, rather than performing dynamic programming or importance sampling, they benefit from convenient and powerful supervised learning methods when fitting the model, allowing them to effectively utilize large and diverse datasets. However, like dynamic programming methods, model-based offline RL methods are not immune to the effects of distribution shift. In this section, we briefly discuss how distributional shift affects model-based reinforcement learning methods, then survey a number of works that utilize models for offline reinforcement learning, and conclude with a brief discussion of open challenges. A complete treatment of all model-based reinforcement learning work, as well as work that learns predictive models of physics but does not utilize them for control (e.g., \citet{lerer2016learning,battaglia2016interaction}), is outside the scope of this tutorial, and we focus primarily on work that performs both model-fitting and control from offline data.

\subsection{Model Exploitation and Distribution Shift}

As discussed in Section~\ref{sec:rl_prelims}, the model in a model-based RL algorithm can be utilized either for planning (including online, via MPC) or for training a policy. In both cases, the model must provide accurate predictions for state-action tuples that are more optimal with respect to the current task. That is, the model will be queried at $\bs \sim \freq^\policy(\bs)$, where $\policy$ is either an explicit policy, or an implicit policy produced by running planning under the model. In general $\freq^\policy(\bs) \neq \freq^\behavior(\bs)$, which means that the model is itself susceptible to distribution shift. In fact, the model suffers from distribution shift both in terms of the state distribution $\freq^\policy(\bs)$, and the action distribution $\policy(\ba|\bs)$.

Since the policy (or action sequence, in the case of planning) is optimized to obtain the highest possible expected reward under the current model, this optimization process can lead to the policy \emph{exploiting} the model, intentionally producing out-of-distribution states and actions at which the model $\transitions_\psi(\bs_{t+1}|\bs_t,\ba_t)$ erroneously predicts successor states $\bs_{t+1}$ that lead to higher returns than the actual successor states that would be obtained in the real MDP. This \emph{model exploitation} problem can lead to policies that produce substantially worse performance in the real MDP than what was predicted under the model. Prior work in \emph{online} model-based RL has sought to address this issue primarily via uncertainty estimation, analogously to the uncertainty-based methods discussed in Section~\ref{sec:adp_uncertainty}, but this time modeling the epistemic uncertainty of the learned model $\transitions_\psi(\bs_{t+1}|\bs_t,\ba_t)$. In low-dimensional MDPs, such uncertainty estimates can be produced by means of Bayesian models such as Gaussian processes~\citep{deisenroth2011pilco}, while for higher-dimensional problems, Bayesian neural networks and bootstrap ensembles can be utilized~\citep{pets,mbpo}. Effective uncertainty estimation is generally considered an important component of modern model-based reinforcement learning methods, for the purpose of mitigating model exploitation.

Theoretical analysis of model-based policy learning can provide bounds on the error incurred from the distributional shift due to the divergence between the learned policy $\policy(\ba|\bs)$ and the behavior policy $\behavior(\ba|\bs)$~\citep{sun2018dual,luo2018algorithmic,mbpo}. This analysis resembles the distributional shift analysis provided in the DAgger example in Section~\ref{sec:basic_challenges}, except that now both the policy and the transition probabilities experience distributional shift. Following \citet{mbpo}, if we assume that the total variation distance (TVD) between the learned model $\transitions_\psi$ and true model $\transitions$ is bounded by $\epsilon_m = \max_t \E_{\freq_t^\policy} D_{TV}(\transitions_\psi(s_{t+1}|s_t,a_t) \| \transitions(s_{t+1}|s_t,a_t))$, and the TVD between $\policy$ and $\behavior$ is likewise bounded on sampled states by $\epsilon_\policy$, then the true policy value $J(\policy)$ is related to the policy estimate under the model, $J_\psi(\policy)$, according to
\[
J(\policy) \geq J_\psi(\policy) - \left[
\frac{2\discount\reward_{\text{max}} (\epsilon_m + 2\epsilon_\policy)}{(1-\discount)^2} +
\frac{4\reward_{\text{max}} \epsilon_\policy}{1-\discount}
\right].
\]
Intuitively, the second term represents error accumulation due to the distribution shift in the policy, while the first term represents error accumulation due to the distribution shift in the model. The first term also includes a dependence on $\epsilon_\policy$, because policies that diverge more from $\behavior$ will lead to states that are further outside of the data distribution, which in turn will make errors in the model more likely. \citet{mbpo} also argue that a modified model-based RL procedure that resembles Dyna~\cite{sutton1991dyna}, where only short-horizon rollouts from the model are generated by ``branching'' off of states seen in the data, can mitigate this accumulation of error. This is also intuitively natural: if applying the learned model recursively leads to compounding error, then shorter rollouts should incur substantially less error.

\subsection{Brief Survey of Model-Based Offline Reinforcement Learning}

A natural and straightforward way to utilize model-based reinforcement learning algorithms in the offline setting is to simply train the model from offline data, with minimal modification to the algorithm. As with Q-learning and actor-critic methods, model-based reinforcement learning algorithms can be applied to the offline setting na\"{i}vely. Furthermore, many effective model-based reinforcement learning methods \emph{already} take steps to limit model exploitation via a variety of uncertainty estimation methods~\citep{deisenroth2011pilco,pets}, making them reasonably effective in the offline setting. Many such methods have been known to exhibit excellent performance in conventional off-policy settings, where additional data collection is allowed, but prior data is also utilized~\citep{sutton1991dyna,watter2015embed,zhang2018solar,hafner2018learning,mbpo}. Indeed, \citet{yu2020mopo} showed that MBPO~\citep{mbpo} actually attains reasonable performance on standard offline RL settings without modification, whereas na\"{i}ve dynamic programming methods (e.g., soft actor-critic~\citep{sac}) fail to learn meaningful policies without policy constraints. This suggests that model-based RL algorithms are likely to lead to an effective class of offline reinforcement learning methods.

\paragraph{Offline RL with standard model-based RL methods.} A number of recent works have also demonstrated effective offline learning of predictive models and their application to control in complex and high-dimensional domains, including settings with image observations. Some of these methods have directly utilized high-capacity predictive models on high-dimensional observations, such as images, directly employing for online trajectory optimization (i.e., MPC). Action-conditional convolutional neural networks~\citep{oh2015action} have been used to provide accurate, long-term prediction of behavior in Atari games and have been combined with RL to improve sample-efficiency over model-free methods~\citep{kaiser2019model}. The \emph{visual foresight} method involves training a video prediction model to predict future image observations for a robot, conditioned on the current image and future sequence of actions. The model is represented with a recurrent neural network, and trained on large amounts of offline data, typically collected with an uninformative randomized policy~\citep{finn2017deep,ebert2018visual}. More recent work has demonstrated this approach on very large and diverse datasets, collected from multiple viewpoints, many objects, and multiple robots, suggesting a high degree of generalization, though the particular behaviors are comparatively simple, typically relocating individual objects by pushing or grasping~\cite{dasari2019robonet}. A number of prior works have also employed ``hybrid'' methods that combine elements of model-free and model-based algorithms, predicting future rewards or reward features conditioned on a sequence of future actions, but avoiding direct prediction of future observations. A number of such methods have been explored in the conventional online RL setting~\citep{tamar2016value,dosovitskiy2016learning,oh2017value}, and in the offline RL setting, such techniques have been applied effectively to learning navigational policies for mobile robots from previously collected data~\citep{kahn2018composable,kahn2020badgr}.

\paragraph{Off-policy evaluation with models.} Model learning has also been explored considerably in the domain of off-policy evaluation (OPE), as a way to reduce the variance of importance sampled estimators of the sort discussed in Section~\ref{sec:ope}. Here, the model is used to provide a sort of baseline for the expected return inside of an importance sampled estimator, as illustrated in Equation~(\ref{eq:doubly_robust}) in Section~\ref{sec:ope}. In these settings, the model is typically trained from offline data~\citep{jiang2015doubly,thomas2016data,farajtabar2018more,wang2017optimal}.

\paragraph{Distribution and safe region constraints.} As with the policy constraint methods in Section~\ref{sec:policy_constraints}, model-based RL algorithms can also utilize constraints imposed on the policy or planner with the learned model. Methods that guarantee Lyapunov stability~\citep{berkenkamp2017safe} of the learned model can be used to constrain policies within ``safe'' regions of the state space where the chance of failure is small. Another example of such an approach is provided by deep imitative models (DIMs)~\citep{rhinehart2018deep}, which learn a normalizing flow model over future trajectories from offline data, conditioned on a high-dimensional observation. Like the hybrid methods described above, DIMs avoid direct prediction of observations. The learned distribution over dataset trajectories can then be used to both provide predictions for a planner, and provide a distributional constraint, preventing the planner from planning trajectories that deviate significantly from the training data, thus limiting distributional shift. 

\paragraph{Conservative model-based RL algorithms.} More recently, two concurrent methods, MoREL~\citep{kidambi2020morel} and MOPO~\citep{yu2020mopo}, have proposed offline model-based RL algorithms that aim to utilize conservative value estimates to provide analytic bounds on performance. Conceptually, these methods resemble the conservative estimation approach described in Section~\ref{sec:adp_conservative}, but instead of regularizing a value function, they modify the MDP model learned from data to induce conservative behavior. The basic principle is to provide the policy with a penalty for visiting states under the model where the model is likely to be incorrect. If the learned policy takes actions that remain in regions where the model is accurate, then the model-based estimate of the policy's value (and its gradient) will likely be accurate also. Let $u(\bs,\ba)$ denote an \emph{error oracle} that provides a consistent estimate of the accuracy of the model at state-action tuple $(\bs,\ba)$. For example, as proposed by \citet{yu2020mopo}, this oracle might satisfy the property that $D(\transitions_\psi(s_{t+1}|s_t,a_t) \| \transitions(s_{t+1}|s_t,a_t)) \leq u(\bs,\ba)$ for some divergence measure $D(\cdot,\cdot)$. Then, conservative behavior can be induced either by modifying the reward function to obtain a conservative reward of the form $\tilde{r}(\bs,\ba) = r(\bs,\ba) - \lambda u(\bs,\ba)$, as in MOPO~\citep{yu2020mopo}, or by modifying the MDP under the learned model so that the agent enters an absorbing state with a low reward value when $u(\bs,\ba)$ is below some threshold, as in MoREL~\citep{kidambi2020morel}. In both cases, it is possible to show that the model-based estimate of the policy's performance under the model the modified reward function or MDP structure bounds that policy's true performance in the real MDP, providing appealing theoretical guarantees on performance. Note, however, that such approaches still require a consistent estimator for the error oracle $u(\bs,\ba)$. Prior work has used disagreement in a bootstrap ensemble to provide this estimate, but it is not guaranteed to be consistent under sampling error, and resolving this limitation is likely to be an important direction for future work.

\subsection{Challenges and Open Problems}

Although model-based reinforcement learning appears to be a natural fit for the offline RL problem setting, current methods for fully offline model-based reinforcement learning generally rely on explicit uncertainty estimation for the model to detect and quantify distributional shift, for example by using a bootstrap ensemble. Although uncertainty estimation for models is in some ways more straightforward than uncertainty estimation for value functions, current uncertainty estimation methods still leave much to be desired, and it seems likely that offline performance of model-based RL methods can be improved substantially by carefully accounting for distributional shift.

Model-based RL methods also present their own set of challenges: while some MDPs are easy to model accurately, others can be exceedingly difficult. Modeling MDPs with very high-dimensional image observations and long horizons is a major open problem, and current predictive modeling methods generally struggle with long-horizon prediction. Hybrid methods that combine model-based and model-free learning, for example by utilizing short rollouts~\citep{sutton1991dyna,mbpo} or avoiding prediction of full observations~\citep{dosovitskiy2016learning,oh2017value,kahn2020badgr} offer some promise in this area.

It is also still an open theoretical question as to whether model-based RL methods even \emph{in theory} can improve over model-free dynamic programming algorithms. The reasoning behind this question is that, although dynamic programming methods do not \emph{explicitly} learn a model, they essentially utilize the dataset as a ``non-parametric'' model. Fundamentally, both dynamic programming methods and model-based RL methods are solving \emph{prediction} problems, with the former predicting future returns, and the latter predicting future states. Moreover, model-free methods can be modified to predict even more general quantities~\citep{sutton2011horde}, such as return values with different discount factors, return values for different number of steps into the future, etc. In the linear function approximation case, it is known that model-based updates and fitted value iteration updates actually produce identical iterates~\citep{vanseijen2015deeper,parr2008analysis}, though it is unknown whether this relationship holds under non-linear function approximation. Therefore, exploring the theoretical bounds on the optimal performance of offline model-based RL methods under non-linear function approximation, as compared to offline dynamic programming methods, remains an open problem.

\section{Applications and Evaluation}
\label{sec:applications}

In this section, we survey and discuss evaluation methods, benchmarks, and applications for offline reinforcement learning methods. As discussed in Section~\ref{sec:introduction}, and as we will discuss further in Section~\ref{sec:discussion}, it is very likely that the full potential of offline reinforcement learning methods has yet to be fully realized, and perhaps the most exciting applications of such methods are still ahead of us. Nonetheless, a number of prior works have applied offline reinforcement learning in a range of challenging domains, from safety-critical real-world physical systems to large-scale learning from logged data for recommender systems. We first discuss how offline reinforcement learning algorithms have been evaluated in prior work, and then discuss specific application domains where such methods have already made an impact.

\subsection{Evaluation and Benchmarks}

While individual application domains, such as recommender systems and healthcare, discussed below, have developed particular domain-specific evaluations, the general state of benchmarking for modern offline reinforcement learning research is less well established. In the absence of well-developed evaluation protocols, one approach employed in recent work is to utilize training
data collected via a standard online reinforcement learning algorithm, using either the entire replay buffer for an off-policy algorithm for training~\citep{kumar2019stabilizing,agarwal2019optimistic,fujimoto2018off}, or even data from the optimal policy. However, this evaluation setting is
rather unrealistic, since the entire point of utilizing offline reinforcement learning algorithms in the real world is to obtain a policy that is better than the best behavior in the dataset, potentially in settings where running reinforcement learning online is impractical due to cost or safety concerns. A simple compromise solution is to utilize data from a ``suboptimal'' online reinforcement learning run, for example by stopping the online process early, saving out the buffer, and using this buffer as the dataset for offline RL~\citep{kumar2019stabilizing}. However, even this formulation does not fully evaluate capabilities of offline reinforcement learning methods, and the statistics of the training data have a considerable effect on the difficult of offline RL~\citep{d4rl}, including how concentrated the data distribution is around a specific set of trajectories, and how multi-modal the data is. Broader data distributions (i.e., ones where $\behavior(\ba|\bs)$ has higher entropy) are generally easier to learn from, since there are fewer out-of-distribution actions. On the other hand, highly multi-modal behavior policies can be extremely difficult to learn from for methods that require explicit estimation of $\behavior(\ba|\bs)$, as discussed in Section~\ref{sec:policy_constraints} and \ref{sec:adp_challenges}. Our recently proposed set of offline reinforcement learning benchmarks aims to provide standardized datasets and simulations that cover such difficult cases~\citep{d4rl}.

\begin{wrapfigure}{r}{0.39\textwidth} 
\vspace{-19pt}
\centering
\includegraphics[width=0.33\textwidth]{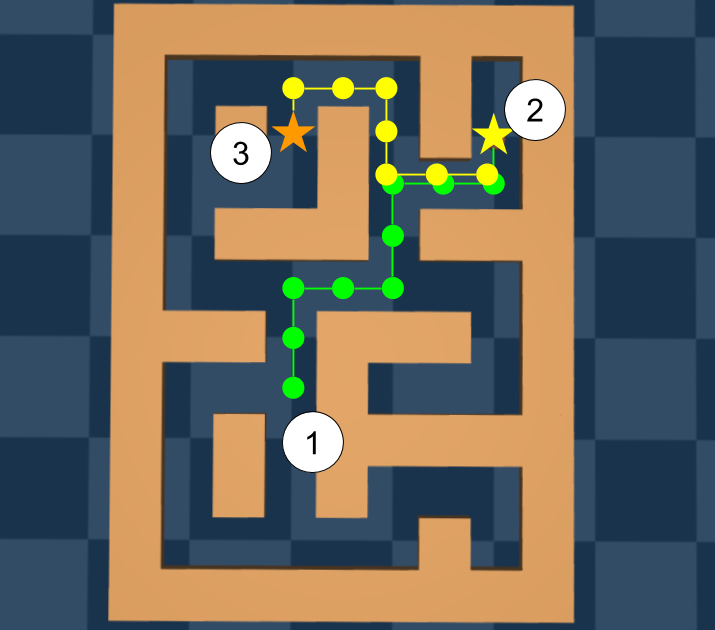}
\caption{\label{fig:stitch} An example of exploiting compositional structure in trajectories to find shortest paths in the Maze2D environment in the D4RL benchmark suite~\citep{d4rl}. }
\vspace{-28pt}
\end{wrapfigure}
A reasonable question we might ask in regard to datasets for offline RL is: in which situations might we actually expect offline RL to yield a policy that is significantly better than \emph{any} trajectory in the training set? While we cannot expect offline RL to discover actions that are better than any action illustrated in the data, we can expect it to effectively utilize the compositional structure inherent in any temporal process. This idea is illustrated in Figure~\ref{fig:stitch}: if the dataset contains a subsequence illustrating a way to arrive at state $2$ from state $1$, as well as a separate subsequence illustrating how to arrive at state $3$ from state $2$, then an effective offline RL method should be able to learn how to arrive at state $3$ from state $1$, which might provide for a substantially higher final reward than any of the subsequences in the dataset. When we also consider the capacity of neural networks to generalize, we could imagine this sort of ``transitive induction'' taking place on a \emph{portion} of the state variables, effectively inferring potentially optimal behavior from highly suboptimal components. This capability can be evaluated with benchmarks that explicitly provide data containing this structure, and the D4RL benchmark suite provides a range of tasks that exercise this capability~\citep{d4rl}.

Accurately evaluating the performance of offline RL algorithms can be difficult, because we are typically interested in maximizing the \textit{online} performance of an algorithm. When simulators are available, online evaluations can be cheaply performed within the simulator order to benchmark the performance of algorithms. Off-policy evaluation (OPE) can also be used to estimate the performance of policies without explicit online interaction, but it is an active area of research as discussed in Section~\ref{sec:ope}. Nevertheless, OPE is a popular tool in areas such as online advertising~\citep{li2010contextual} or healthcare~\citep{murphy2001marginal} where online evaluation can have significant financial or safety consequences. In certain domains, human experts can be used to assess the quality of the decision-making system. For example,~\citet{jaques2019way} uses crowd-sourced human labeling to judge whether dialogue generated by an offline RL agent is fluent and amicable, and~\citet{raghu2017deep} evaluates using a qualitative analysis based one domain experts for sepsis treatment.

\subsection{Applications in Robotics}

Robotics is an appealing application domain for offline reinforcement learning, since RL has the potential to automate the acquisition of complex behavioral skills for robots -- particularly with raw sensory observations, such as camera images -- but conducting online data collection for each robotic control policy can be expensive and impractical. This is especially true for robots that must learn to act intelligently in complex open-world environments, since the challenge of robust visual perception alone already necessitates large training sets. The ImageNet Large-Scale Visual Recognition
Challenge~\citep{russakovsky2015imagenet} stipulates a training set of 1.5 million images for open-world object recognition, and it seems reasonable that the sample complexity for a robotic RL algorithm that must act in similar real-world settings should be at least of comparable size. For this reason, utilizing previous collected data can be of critical importance in robotics.

Several prior works have explored offline RL methods for learning robotic grasping, which is a particularly appealing task offline RL methods, since it requires the ability to generalize to a wide range of 
\begin{wrapfigure}{r}{0.5\textwidth} 
\centering
\includegraphics[width=0.5\textwidth]{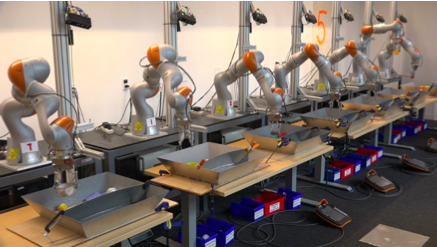}
\caption{Large-scale robotic grasping data collection. \citet{kalashnikov2018qtopt} describes how a dataset of over 500,000 grasp trials collected from multiple robots was used to train a vision-based grasping policy, comparing fully offline training and online fine-tuning.\label{fig:qtopt}}
\vspace{-15pt}
\end{wrapfigure}
objects~\citep{pinto2016supersizing,levine2018learning,kalashnikov2018qtopt,zeng2018learning}. Such methods have utilized approximate dynamic programming~\citep{kalashnikov2018qtopt} (see Figure~\ref{fig:qtopt}), as well as more domain-specific algorithms, such as a single-step bandit formulation~\citep{pinto2016supersizing}. Outside of robotic grasping, \citet{ebert2018visual} propose a model-based algorithm based on prediction of future video frames for learning a variety of robotic manipulation skills from offline data, while \citet{dasari2019robonet} expand on this approach with a large and diverse dataset of robotic interaction.
\citet{cabi2019framework} propose to use reward learning from human preferences combined with offline RL to provide a user-friendly method for controlling robots for object manipulation tasks. In the domain of robotic navigation, \citet{Mo18AdobeIndoorNav} propose a dataset of visual indoor scenes for reinforcement learning, collected via a camera-mounted-robot, and \citet{mandlekar2020learning} proposes a dataset of human demonstrations for robotic manipulation. \citet{kahn2020badgr} discuss how a method based on prediction of future reward signals, blending elements of model-based and model-free learning, can learn effective navigation policies from data collected in the real world using a random exploration policy. Pure model-based methods in robotics typically involve training a model on real or simulated data, and then planning within the model to produce a policy that is executed on a real system. Approaches have included using Gaussian process models for controlling blimps~\citep{ko2007gaussian}, and using linear regression~\citep{koppejan2009neuroevolutionary} and locally-weighted Bayesian regression~\citep{bagnell2001autonomous} for helicopter control.

\subsection{Applications in Healthcare}

Using offline reinforcement learning in healthcare poses several unique challenges~\citep{gottesman2018evaluating}. Safety is a major concern, and largely precludes any possibility of online exploration. Datasets can also be significantly biased towards serious outcomes~\citep{gottesman2019guidelines}, since minor cases rarely require treatment, and can lead na\"{i}ve agents to erroneous conclusions, for example that any drug treatment may cause death simply because it is not prescribed to otherwise healthy individuals. 

The MIMIC-III dataset~\citep{johnson2016mimic}, which contains approximately 60K medical records from ICUs, has been influential in enabling data-driven research in healthcare treatment. Q-learning methods on this dataset has been applied to problems such as the treatment of sepsis~\citep{raghu2017deep} and optimizing the use of ventilators~\citep{prasad2017reinforcement}.~\citet{wang2018supervised} apply actor-critic methods on MIMIC-III to determine drug recommendations.  

\begin{wrapfigure}{l}{0.4\textwidth}
\vspace{-10pt}
\centering
\includegraphics[width=0.4\textwidth]{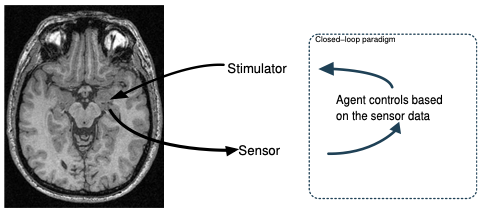}
\caption{A real-time epilepsy treatment system, train using offline reinforcement learning~\citep{guez2008adaptive}.}
\vspace{-10pt}
\end{wrapfigure} 

Outside of ICU settings, offline RL applications include learning from recordings of seizure activity in mouse brains in order to determine optimal stimulation frequencies for reducing epileptic seizures~\citep{guez2008adaptive}.
Offline RL has also been used for optimizing long term treatment plans.~\citet{shortreed2011informing} uses offline fitted Q-iteration for optimizing schizophrenia treatment,~\citet{nie2019learning} uses doubly-robust estimators to safely determine proper timings of medical treatments, and~\citet{tseng2017deep} uses a model-based approach for lung cancer treatment. Careful application of offline RL that can handle such challenges may offer healthcare providers powerful assistive tools for optimizing the care of patients and ultimately improving outcomes.

\subsection{Applications in Autonomous Driving}

As in healthcare, a significant barrier to applying reinforcement learning in the domain of self-driving vehicles is safety. 
In the online setting, exploratory agents can select actions that lead to catastrophic failure, potentially endangering the lives of the passengers. Thus, offline RL is potentially a promising tool for enabling, safe, effective learning in autonomous driving.

\begin{wrapfigure}{r}{0.4\textwidth} 
\vspace{-10pt}
\centering
\includegraphics[width=0.4\textwidth]{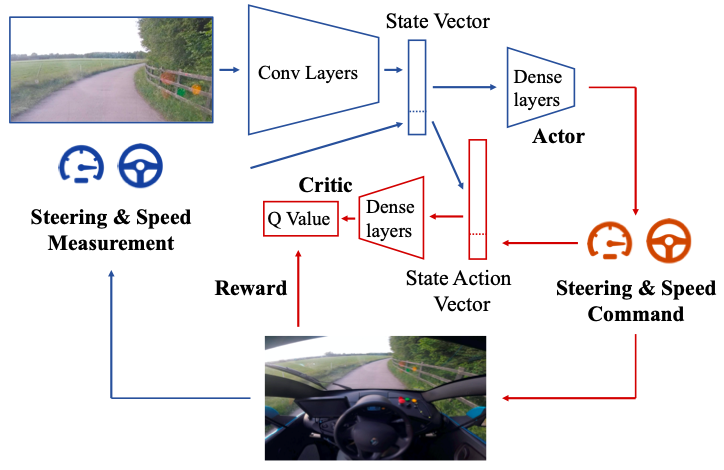}
\caption{A road following system trained end-to-end via reinforcement learning~\citep{kendall2019learning}. \label{fig:kendall}}
\vspace{-10pt}
\end{wrapfigure}
While offline RL has not yet found significant application in actual real-world self-driving vehicles~\citep{yurtsever2020survey}, learning-based approaches have been gaining in popularity.
RobotCar~\citep{maddern2017robotcar} and BDD-100K~\citep{yu2018bdd100k} are both large video datasets containing thousands of hours of real-life driving activity. Imitation learning has been a popular approach
towards end-to-end, data-driven methods in autonomous driving~\citep{bojarski2016end,sun2018fast,pan2017agile,codevilla2018end}. Reinforcement learning approaches have been applied in both simulation~\citep{sallab2017deep} and in the real world, with human interventions in case the vehicle violates a safety constraint~\citep{kendall2019learning} (see Figure~\ref{fig:kendall}).
Model-based reinforcement learning methods that employ constraints to keep the agent close to the training data for the model, so as to avoid out-of-distribution inputs as discussed in Section~\ref{sec:model_based}, can effectively provide elements of imitation learning when training on driving demonstration data, as for example in the case of deep imitative models (DIMs)~\citep{rhinehart2018deep}. Indeed, with the widespread availability of high-quality demonstration data, it is likely that effective methods for offline RL in the field of autonomous driving will, explicitly or implicitly, combine elements of imitation learning and reinforcement learning.

\subsection{Applications in Advertising and Recommender Systems}

Recommender systems and advertising are particularly suitable domains for offline RL because data collection is easy and efficient, and can be obtained by logging user behavior. However, these domains are also ``safety critical,'' in the sense that a highly suboptimal policy may result in large monetary losses, thereby making unconstrained online exploration problematic. Thus, offline RL approaches have a long history of application in this area. 

Off-policy evaluation is commonly used as a tool for performing A/B tests and estimating the performance of advertising and recommender systems without additional interaction with the environment. In contrast to the other applications discussed, policy evaluation for recommender systems is typically formulated within a contextual bandit problem~\citep{langford2008exploration,li2010contextual}, where states may correspond to user history and actions correspond to recommendations. This approximation removes the need for \textit{sequential} decision making, but can introduce approximation errors if actions have temporal dependence as in domains such as robotics or healthcare. 

Applications of offline RL for recommender systems include slate and whole-page optimization~\citep{swaminathan2017off}, applying doubly robust estimation to estimate website visits~\citet{dudik2014doubly}, and A/B testing for click optimization~\citep{gilotte2018offline}. Policy learning from logged, offline data has included studies on optimizing newspaper article click-through-rates~\citep{strehl2010learning,garcin2014offline}, advertisement ranking on search pages~\citep{bottou2013counterfactual}, and personalized ad recommendation for digital marketing~\citep{theocharous2015personalized,thomas2017predictive}.

\subsection{Applications in Language and Dialogue}

Interaction via natural language is not typically thought of as a reinforcement learning problem, but in fact the formalism of sequential decision making can provide a powerful tool for natural language interaction: when dialogue is modeled as a \emph{purposeful} interaction, the RL framework can in principle offer an effective mechanism for learning policies for outputting natural language responses to human interlocutors. The most direct way to utilize standard online RL methods for natural language interaction -- by having machines engage in dialogue with real humans -- can be exceedingly tedious, especially in the early stages of training, when the policy would produce mostly non-sensical dialogue. 
For this reason, offline RL offers a natural avenue to combine the optimal decision making formalism of RL with the kinds of large datasets of human-to-human conversations available in NLP.

In prior work, offline RL approaches have been applied in the areas of dialogue and language interfaces, where datasets consist of logged interactions, such as agent-customer transcripts~\citep{zhou2017end}. An example of an application is dialogue management, which is typically concerned 
\begin{wrapfigure}{l}{0.7\textwidth} 
\vspace{-10pt}
\centering
\includegraphics[width=0.7\textwidth]{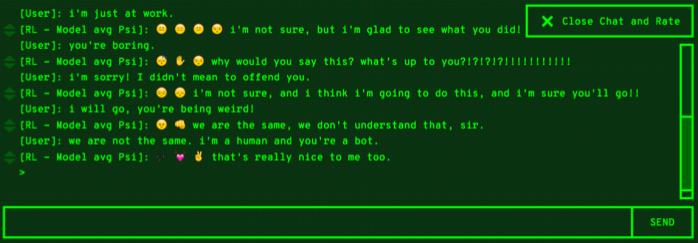}
\caption{A dialogue agent trained via offline reinforcement learning interacting with a human user, with the aim of elicit responses with positive sentiment~\citep{jaques2019way}. \label{fig:dialogue}}
\vspace{-10pt}
\end{wrapfigure}
with accomplishing a specific goal, such as retrieving information. 
Examples of this have included applications of offline RL to the problem of flight booking~\citep{henderson2008hybrid}, restaurant information retrieval~\citet{pietquin2011sample}, and restaurant recommendation~\citep{kandasamy2017batch}. \citet{jaques2019way} applies offline RL to the problem of dialogue generation, and focuses on producing natural responses that elicit positive feedback from human users. An example of an interaction with a trained agent is shown in Figure~\ref{fig:dialogue}.


\section{Discussion and Perspectives}
\label{sec:discussion}

Offline reinforcement learning offers the possibility of turning reinforcement learning -- which is conventionally viewed as a fundamentally active learning paradigm -- into a data-driven discipline, such that it can benefit from the same kind of ``blessing of scale'' that has proven so effective across a range of supervised learning application areas~\citep{lecun2015deep}. However, making this possible will require new innovations that bring to bear sophisticated statistical methods and combine them with the fundamentals of sequential decision making that are conventionally studied in reinforcement learning. Standard off-policy reinforcement learning algorithms have conventionally focused on dynamic programming methods that can utilize off-policy data, as discussed in Section~\ref{sec:rl_prelims} and Section~\ref{sec:adp}, and importance sampling methods that can incorporate samples from different sampling distributions, as discussed in Section~\ref{sec:offline_pg}.

However, both of these classes of approaches struggle when scaled up to complex high-dimensional function approximators, such as deep neural networks, high-dimensional state or observation spaces, and temporally extended tasks. As a result, the standard off-policy training methods in these two categories have generally proven unsuitable for the kinds of complex domains typically studied in modern deep reinforcement learning. More recently, a number of improvements for offline RL methods have been proposed that take into account the statistics of distributional shift, via either policy constraints or uncertainty estimation, such as the policy constraint formulations that we discuss in Section~\ref{sec:policy_constraints}. These formulations have the potential to mitigate the shortcomings of early methods, by explicitly account for the key challenge in offline RL: distributional shift due to differences between the learned policy and the behavior policy.

More generally, such methods shed light on the fact that offline RL is, at its core, a counter-factual inference problem: given data that resulted from a given set of decisions, infer the consequence of a \emph{different} set of decisions. Such problems are known to be exceptionally challenging in machine learning, because they require us to step outside of the commonly used i.i.d. framework, which assumes that test-time queries involve the same distribution as the one that produced the training data~\citep{scholkopf2019causality}. It therefore stands to reason that the initial solutions to this problem, proposed in recent work, should aim to reduce distributional shift, either by constraining the policy's deviation from the data, or by estimating (epistemic) uncertainty as a measure of distributional shift. Moving forward, we might expect that a variety of tools developed for addressing distributional shift and facilitating generalization may find use in offline RL algorithms, including techniques from causal inference~\citep{scholkopf2019causality}, uncertainty estimation~\citep{gal2016dropout,kendall2017uncertainties}, density estimation and generative modeling~\citep{kingma2014semi}, distributional robustness~\citep{sinha2017certifying,sagawa2019distributionally} and invariance~\citep{arjovsky2019invariant}. More broadly, methods that aim to estimate and address distributional shift, constrain distributions (e.g., various forms of trust regions), and evaluate distribution support from samples are all potentially relevant to developing improved methods for offline reinforcement learning.


The counter-factual inference perspective becomes especially clear when we consider model-based offline RL algorithms, as discussed briefly in Section~\ref{sec:model_based}. In this case, the model answers the question: ``what would the resulting state be if the agent took an action other than the one in the dataset?'' Of course, the model suffers from distributional shift in much the same way as the value function, since out-of-distribution state-action tuples can result in inaccurate prediction. Nonetheless, prior work has demonstrated good results with model-based methods, particularly in regard to generalization with real-world data~\citep{finn2017deep,ebert2018visual}, and a range of works on predicting physical phenomena have utilized offline datasets~\citep{lerer2016learning,battaglia2016interaction}. This suggests that model learning may be an important component of effective future offline reinforcement learning methods.

To conclude our discussion of offline reinforcement learning, we will leave the reader with the following thought. While the machine learning community frequently places considerable value on design of novel algorithms and theory, much of the amazing practical progress that we have witnessed over the past decade has arguably been driven just as much by advances in datasets as by advances in methods. Indeed, widely deployed techniques in computer vision and NLP utilize learning methods that are relatively old and well understood, and although improvements in architectures and models have driven rapid increase in performance, the increasing size and diversity of datasets -- particularly in real-world applications -- have been an instrumental driver of progress. In real-world applications, collecting large, diverse, representative, and well-labeled datasets is often far more important than utilizing the most advanced methods. In the standard active setting in which most reinforcement learning methods operate, collecting large and diverse datasets is often impractical, and in many applications, including safety-critical domains such as driving, and human-interactive domains such as dialogue systems, it is prohibitively costly in terms of time, money, and safety considerations. Therefore, developing a new generation of \emph{data-driven} reinforcement learning may usher in a new era of progress in reinforcement learning, both by making it possible to bring it to bear on a range of real-world problems that have previously been unsuited for such methods, and by enabling current applications (e.g., in robotics or autonomous driving) to benefit from much larger, more diverse, and more representative datasets that can be reused effectively across experiments.

\bibliographystyle{apalike}
\bibliography{references}

\end{document}